\documentclass{article} 
\usepackage{iclr2027_conference,times}

\usepackage{amsmath,amsfonts,bm}

\def\eqref#1{equation~\ref{#1}}

\def\1{\bm{1}}

\DeclareMathAlphabet{\mathsfit}{\encodingdefault}{\sfdefault}{m}{sl}
\SetMathAlphabet{\mathsfit}{bold}{\encodingdefault}{\sfdefault}{bx}{n}

\usepackage{hyperref}
\usepackage{url}

\usepackage{float}
\usepackage{amsmath}
\usepackage{amsthm}

\usepackage{amssymb}
\usepackage{algorithm}
\usepackage{algpseudocode}

\algrenewcommand\algorithmicrequire{\textbf{Require:}}
\algrenewcommand\algorithmicensure{\textbf{Return:}}

\usepackage{graphicx}
\usepackage{multirow}
\usepackage{makecell}
\usepackage{color}
\usepackage{subfigure}
\usepackage{xspace}
\usepackage{booktabs}
\usepackage{caption}

\newcommand{\ProposedM}{\textsc{GPARA}\xspace}

\theoremstyle{plain}
\newtheorem{prop}{\protect\propositionname}
\newtheorem{cor}{\protect\corollaryname}
\newtheorem{lem}{\protect\lemmaname}
\providecommand{\corollaryname}{Corollary}
\providecommand{\lemmaname}{Lemma}
\providecommand{\propositionname}{Proposition}

\title{GPARA: Graph-Posterior-Aligned Refinement and Active Acquisition for Grounding Diffusion Priors }

\newcommand{\affmark}[1]{%
  \textsuperscript{\normalfont #1}%
}
\newcommand{\authorsep}{,\nobreak\hspace{0.25em}}

\author{
\begin{minipage}{\dimexpr\textwidth-2\tabcolsep\relax}
\centering
Wangqian Chen\affmark{1}\authorsep
Hao Wang\affmark{2}\authorsep
Yumeng Zhang\affmark{1}\authorsep
Jiajia Guo\affmark{1}\authorsep
Junting Chen\affmark{2}\authorsep
Jun Zhang\affmark{1} \\
\vspace{0.35em}
{\normalfont
\textsuperscript{1}The Hong Kong University of Science and Technology}\\
{\normalfont
\textsuperscript{2}The Chinese University of Hong Kong, Shenzhen}\\[0.15em]
\end{minipage}
}

\hypersetup{hidelinks}

\iclrfinalcopy

\begin{document}

\maketitle

\fancyhead{}
\renewcommand{\headrulewidth}{0pt}

\begin{abstract}
Active grounding of a frozen diffusion prior requires jointly determining
where new measurements should be taken and how they should be used
to refine the current reconstruction. Posterior-ensemble-based methods
can estimate acquisition utility from generated samples, but require
repeated ensemble generation as observations accumulate and capture
posterior geometry only through empirical statistics. This paper proposes
GPARA, which learns a context-dependent graph surrogate over diffusion
prediction residuals, inducing an explicitly reusable posterior response
operator that propagates measurement innovations to unobserved variables
and evaluates candidate measurements through weighted posterior-risk
reduction. Under the matched surrogate, we show that the same response
operator also determines expected one-step acquisition benefit and
yields an analytic ranking consistent with expected reconstruction
improvement. A bounded learned residual calibrates the analytic utility
to account for surrogate mismatch, while a small prior ensemble is
generated once and reconditioned to update risk weights without repeated
diffusion posterior sampling during acquisition. Experiments on two
reconstruction tasks spanning physical field and computer vision
show consistent improvements in refinement and active acquisition
over the evaluated baselines. Ablations further support the complementary
roles of step-wise graph refinement, adaptive risk weighting, and
analytically anchored calibration.
\end{abstract}

\section{Introduction}\label{sec:introduction}

Diffusion models provide expressive priors for recovering high-dimensional
signals, images, and physical fields from heterogeneous information
\citep{Graiko2022,song_scoresde_2021,rombach_latentdiffusion_2022,FeiLyu2023}.
Given auxiliary context $\boldsymbol{c}$, such as image, feature
representations, geometry layout, or physical metadata, the learned
prior $p_{\theta}(\boldsymbol{x}\mid\boldsymbol{c})$ can generate
a plausible target $\boldsymbol{x}$. This approach has been applied
to imaging, remote sensing, field reconstruction, and geometric perception
\citep{wu_seesr_2024,khanna2024,li_palsb_2025,KeObu2024}. However,
conditional plausibility does not ensure agreement with the real target
realization. Information absent from the context can lead to instance-specific
discrepancies in the predicted structure and values.

Direct target measurements provide complementary evidence about the
current realization. Existing inverse solvers incorporate observations
through spectral or null-space decomposition \citep{kawar_ddrm_2022,song_pigdm_2023,wang_ddnm_2023},
likelihood guidance \citep{chung_dps_2023,aali_ambientdps_2025},
variational or moment-based posterior approximations \citep{mardani_reddiff_2024,boys_tmpd_2024,rout_stsl_2024},
data-consistency \citep{zhu_diffpir_2023,zirvi_diffstategrad_2025,song_resample_2024},
filtering \citep{dou_fps_2024}, or posterior sampling \citep{zhang_daps_2025,wu_pnpdm_2024}.
These methods primarily address reconstruction under a prescribed
observation operator.

Since measurement acquisition incurs sensing energy, bandwidth, or
other resources, the measurements are often sparse. The problem is
thus not only how to reconstruct from a prescribed observation set,
but also which observations are the most useful to acquire. We refer
to this setting as \emph{active grounding}: actively acquiring a limited
number of measurements to refine a frozen diffusion prior. This requires
evaluating how each candidate measurement would affect the unresolved
target, a quantity that conventional inverse solvers do not generally
expose as an explicit, reusable response for active acquisition. Fig.~\ref{fig:Active_grounding}
illustrates this setting and highlights a key acquisition challenge,
where a candidate’s marginal uncertainty alone does not reflect its
value for reconstructing the full target.

Active acquisition has been studied through Bayesian experimental
design \citep{foster_variationalbed_2019,foster_deepadaptivedesign_2021,polyzos_activeradiomap_2024,ivanova_idad_2021}.
Reconstruction-oriented active sensing uses uncertainty- or information-driven
acquisition \citep{zhang_activeMRI_2019,morssy_informed_2024}, and
recent diffusion-based methods derive such criteria from predictive
distributions through variance, diversity, or entropy \citep{barba_diffusionactive_2025,gu_spectrumcartography_2025,nolan_activediffusion_2025}.
Purely marginal criteria prioritize individually uncertain candidates
but do not capture how cross-variable dependencies affect full-target
reconstruction. More structured criteria account for posterior dependence
\citep{iollo_codiff_2025,krause2008}. Notably, AdaSense \citep{elata_adasense_2024}
estimates posterior covariance from empirical samples for linear minimum
mean-squared error (LMMSE)-based criterion, coupling covariance estimation
and acquisition cost to repeated ensemble regeneration as observations
accumulate.

\begin{figure*}[!t]
\centering\includegraphics[scale=0.58]{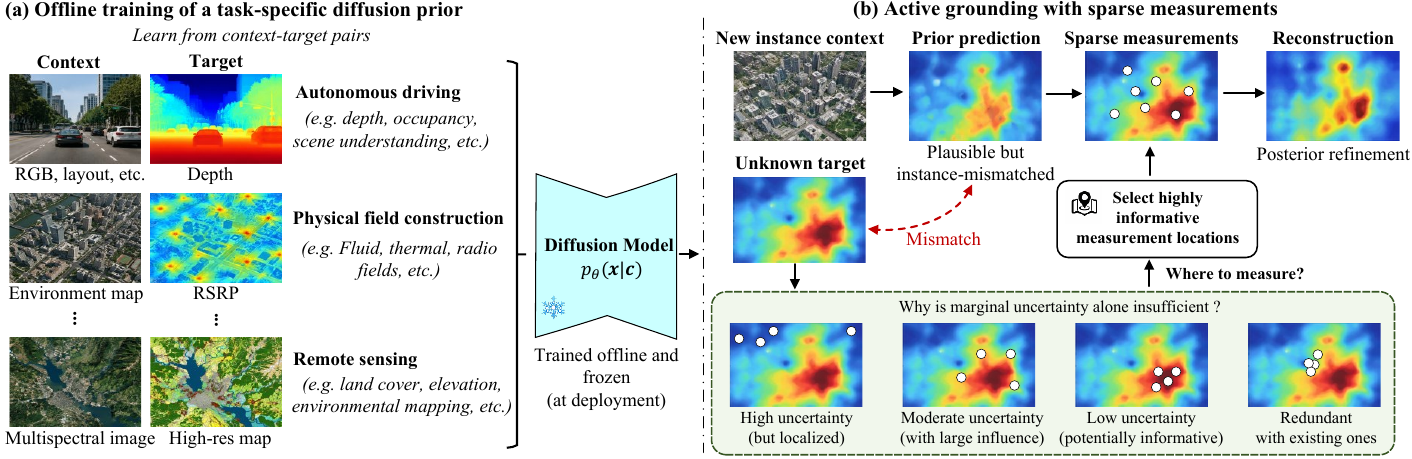}
\caption{Active grounding of a frozen diffusion prior. A task-specific conditional
prior is pretrained offline and frozen. At deployment, context alone
may produce an instance-mismatched prediction, and sparse target measurements
are thus acquired to ground the prior to the current realization.}
\label{fig:Active_grounding}
\end{figure*}

We address these limitations with GPARA, a graph-posterior-aligned
refinement and acquisition framework. Building on graph precision
models \citep{shuman2013gsp,rue2005gmrf,pinti_graphbayes_2025}, GPARA
learns a graph surrogate over prediction residuals, rather than directly
smoothing target values. Unlike AdaSense that infers posterior covariance
from empirical ensembles, GPARA explicitly models residual dependence
through a learned sparse precision, yielding a shared structured posterior
response that both propagates local measurement innovations and evaluates
prospective measurements, with risk weights specifying where uncertainty
reduction is currently valuable. The analytic utility is retained
as an anchor and augmented by a bounded residual calibration.

Our contributions are threefold: (i) we propose a structured graph
residual refinement model that explicitly propagates local measurement
innovations across related variables while keeping the diffusion backbone
frozen; (ii) we develop a response-aligned acquisition that couples
the same graph-induced posterior response with adaptive weighted Bayes
risk, and show that its analytic utility is consistent with expected
one-step reconstruction improvement; and (iii) we combine the analytic
anchor with bounded residual calibration and matrix-free posterior
computation using cached prior samples, stochastic probing, and sparse
linear solves, avoiding dense covariance construction and repeated
posterior-ensemble generation at every acquisition round. 
\section{Background and Problem Formulation}

\subsection{Conditional Diffusion Prior}

We instantiate the formulation with a denoising diffusion probabilistic
model (DDPM) \citep{ho_ddpm_2020,nichol_improved_ddpm_2021}, while
the framework is not restricted to this particular formulation.

Let $\boldsymbol{x}_{0}\in\mathbb{R}^{M}$ denote the clean target
and $\boldsymbol{c}$ the available context. A DDPM defines a $T$-step
forward diffusion process that progressively corrupts $\boldsymbol{x}_{0}$
toward standard Gaussian noise:
\begin{equation}
q(\boldsymbol{x}_{t}\mid\boldsymbol{x}_{0})=\mathcal{N}\left(\boldsymbol{x}_{t};\sqrt{\bar{\alpha}_{t}}\boldsymbol{x}_{0},(1-\bar{\alpha}_{t})\boldsymbol{I}\right),\label{eq:forward_diffusion}
\end{equation}
where $\{\beta_{t}\}^{T}_{t=1}$ is the noise schedule, $\alpha_{t}=1-\beta_{t}$,
$\bar{\alpha}_{t}=\prod^{t}_{s=1}\alpha_{s}$, and $\boldsymbol{I}$
is the identity matrix. A conditional denoising network $\ensuremath{\boldsymbol{\epsilon}_{\theta}(\boldsymbol{x}_{t},t,\boldsymbol{c})}$
is trained to predict the noise injected into $\boldsymbol{x}_{t}$.

At inference time, the reverse process starts from Gaussian noise
$\boldsymbol{x}_{T}\sim\mathcal{N}(\boldsymbol{0},\boldsymbol{I})$.
At each reverse step $t$, the conditional denoiser can first estimate
the clean target as
\begin{equation}
\widehat{\boldsymbol{x}}_{0|t}=\frac{\boldsymbol{x}_{t}-\sqrt{1-\bar{\alpha}_{t}}\boldsymbol{\epsilon}_{\theta}(\boldsymbol{x}_{t},t,\boldsymbol{c})}{\sqrt{\bar{\alpha}_{t}}}.
\end{equation}
The standard reverse transition then uses $\widehat{\boldsymbol{x}}_{0|t}$
and $\boldsymbol{x}_{t}$ to predict $\boldsymbol{x}_{t-1}$:
\begin{equation}
\boldsymbol{x}_{t-1}=a_{t}\widehat{\boldsymbol{x}}_{0|t}+b_{t}\boldsymbol{x}_{t}+\sqrt{\tilde{\beta}_{t}}\boldsymbol{\epsilon},\hspace*{0.3cm}\boldsymbol{\epsilon}\sim\mathcal{N}(\boldsymbol{0},\boldsymbol{I}),\label{eq:reverse transition}
\end{equation}
where $a_{t}$, $b_{t}$, and $\tilde{\beta}_{t}$ are determined
by the predefined diffusion schedule. Once trained, the diffusion
model is frozen and provides the conditional generative prior $p_{\theta}(\boldsymbol{x}_{0}\mid\boldsymbol{c})$
throughout deployment.

\subsection{Problem Formulation}

Although the context $\boldsymbol{c}$ provides rich information cues
about $\boldsymbol{x}_{0}$, environment dynamics, sensing imperfections,
transient states, or fine-scale structures may induce a non-negligible
mismatch between the context-conditioned prior and the actual target.
We retain the pretrained diffusion model $p_{\theta}(\boldsymbol{x}_{0}\mid\boldsymbol{c})$
as an offline prior and use direct measurements to ground it to the
current target.

Since it is costly and often infeasible to obtain dense target measurements,
only a limited budget of $\mathit{\Gamma}\ll M$ measurements can
be acquired. Let $\mathcal{S}$ be the selected observation set with
$|\mathcal{S}|\leq\mathit{\Gamma}$, and let $\boldsymbol{H}_{\mathcal{S}}\in\mathbb{R}^{\mathit{|\mathcal{S}|}\times M}$
be the corresponding observation matrix. The measurement model is
given by
\begin{equation}
\boldsymbol{y}_{\mathcal{S}}=\boldsymbol{H}_{\mathcal{S}}\boldsymbol{x}_{0}+\boldsymbol{\eta}_{\mathcal{S}},\hspace*{0.3cm}\boldsymbol{\eta}_{\mathcal{S}}\sim\mathcal{N}(\boldsymbol{0},\boldsymbol{R}_{\mathcal{S}}),\label{eq:measurement model}
\end{equation}
where $\boldsymbol{R}_{\mathcal{S}}$ is the noise covariance. We
consider equal-cost point observations in this work.

We jointly seek highly informative acquisition and reconstruction
under the sensing budget $\mathit{\Gamma}$. Let $\mathcal{S}_{\mathit{\Gamma}}$
be the observation set induced by an acquisition policy $\pi_{\mathit{\Phi},\psi}$,
which may condition subsequent selections on the information already
obtained. The population-level active grounding objective is
\begin{equation}
(\varPhi^{*},\mathcal{\psi}^{*})\in\arg\min_{\varPhi,\mathcal{\psi}}\hspace{0.1cm}\mathbb{E}\left[\mathcal{L}\left(\boldsymbol{x}_{0},\widehat{\boldsymbol{x}}_{\varPhi}\left(\boldsymbol{c},\boldsymbol{y}_{\mathcal{\mathcal{S}_{\mathit{\Gamma}}}},\mathcal{S}_{\mathit{\Gamma}}\right)\right)\right],\label{eq:channel_vector-1}
\end{equation}
where $\mathit{\Phi}$ and $\psi$ parameterize refinement and acquisition,
respectively, $\widehat{\boldsymbol{x}}_{\varPhi}$ is the target
reconstruction, and $\mathcal{L}(\cdot,\cdot)$ measures reconstruction
error.

The resulting problem thus couples \emph{posterior refinement}, which
determines how newly acquired measurements modify the structured belief
induced by the diffusion prior, with \emph{active acquisition}, which
determines where measurements should be collected to maximize reconstruction
benefit. 
\section{Method}\label{sec:Method}

Fig.~\ref{fig:GPARA} provides an overview of GPARA, which proceeds
in two stages. During active acquisition, a cached prior ensemble
specifies the current spatial risk, and the graph posterior provides
a shared response for conditioning acquired measurements and evaluating
prospective ones through weighted posterior-risk reduction, with a
bounded residual scorer compensating for surrogate mismatch. Newly
obtained measurements are thus incorporated through graph conditioning
without regenerating diffusion samples. After the sensing budget is
reached, final refinement applies the resulting graph-posterior conditioner
to each prediction throughout the reverse diffusion process.

\subsection{Graph-Structured Bayesian Residual Refinement}\label{subsec:Graph-Structured-Bayesian-Refine}

For the clean-state estimate $\widehat{\boldsymbol{x}}_{0|t}$ at
reverse step $t$, we write the remaining discrepancy as
\begin{equation}
\boldsymbol{x}_{0}=\widehat{\boldsymbol{x}}_{0|t}+\boldsymbol{\delta},
\end{equation}
where $\boldsymbol{\delta}\in\mathbb{R}^{M}$ is the residual correction
relative to the prediction. GPARA constructs a shared residual-correction
geometry from a fixed graph-conditioning descriptor $\boldsymbol{c}_{g}$,
including the context $\boldsymbol{c}$ and the known prior $p_{\theta}(\boldsymbol{x}_{0}\mid\boldsymbol{c})$.
A graph network $G_{\phi}$ parameterizes an undirected graph
\begin{equation}
\mathcal{G}=\bigl(\mathcal{V},\mathcal{E},\boldsymbol{C}_{g}\bigr),\hspace*{0.3cm}\boldsymbol{C}_{g}=G_{\phi}(\boldsymbol{c}_{g}),
\end{equation}
where node $i\in\mathcal{\mathcal{V}}$ represents a target variable,
$\mathcal{E}$ captures edges, and $\boldsymbol{C}_{g}=[c_{ij}]\in\mathbb{R}^{M\times M}$
is the symmetric and nonnegative adjacency matrix. A larger $c_{ij}$
encourages measurement corrections to propagate coherently between
coupled variables. Let $\boldsymbol{D}_{g}=\mathrm{diag}(\boldsymbol{C}_{g}\boldsymbol{1})$
and $\ensuremath{\boldsymbol{L}_{g}=\boldsymbol{D}_{g}-\boldsymbol{C}_{g}}$
be the degree matrix and Laplacian, respectively. We define a graph-Gaussian
residual surrogate as
\begin{equation}
q_{\varPhi}(\boldsymbol{\delta}|\boldsymbol{c}_{g})=\mathcal{N}(\boldsymbol{0},\boldsymbol{Q}^{-1}_{g}),\hspace*{0.3cm}\boldsymbol{Q}_{g}=\tau\boldsymbol{I}+\lambda\boldsymbol{L}_{g},\label{eq:graph-Gaussian residual prior}
\end{equation}
where $\tau>0$ controls the residual precision and $\lambda\geq0$
controls the graph coupling. The graph precision is shared across
reverse steps as a reusable correction operator that is learned jointly
across reverse timesteps, rather than parameterized separately for
each noise level.
\begin{prop}
\label{prop:Graph-Bayesian-residual}(Graph-induced residual posterior).
Given the graph residual surrogate (\ref{eq:graph-Gaussian residual prior})
and the observation model (\ref{eq:measurement model}) with $\ensuremath{\boldsymbol{R}_{\mathcal{S}}\succ0}$,
the conditional residual posterior is
\begin{equation}
q_{\varPhi}\left(\boldsymbol{\delta}\mid\widehat{\boldsymbol{x}}_{0|t},\boldsymbol{y}_{\mathcal{S}},\boldsymbol{c}_{g}\right)=\mathcal{N}\left(\boldsymbol{m}_{\mathcal{S},t},\boldsymbol{\Sigma}_{\mathcal{S}}\right),
\end{equation}
where
\begin{equation}
\boldsymbol{m}_{\mathcal{S},t}=\boldsymbol{A}^{-1}_{\mathcal{S}}\boldsymbol{H}^{\mathsf{T}}_{\mathcal{S}}\boldsymbol{R}^{-1}_{\mathcal{S}}\left(\boldsymbol{y}_{\mathcal{S}}-\boldsymbol{H}_{\mathcal{S}}\widehat{\boldsymbol{x}}_{0|t}\right),\hspace*{0.3cm}\boldsymbol{\Sigma}_{\mathcal{S}}=\boldsymbol{A}^{-1}_{\mathcal{S}},\label{eq:mean_n_var}
\end{equation}
and
\begin{equation}
\boldsymbol{A}_{\mathcal{S}}=\boldsymbol{Q}_{g}+\boldsymbol{H}^{\mathsf{T}}_{\mathcal{S}}\boldsymbol{R}^{-1}_{\mathcal{S}}\boldsymbol{H}_{\mathcal{S}}.
\end{equation}
Here, $\mathbf{m}_{\mathcal{S},t}\in\mathbb{R}^{M}$, $\boldsymbol{\Sigma}_{\mathcal{S}}\in\mathbb{R}^{M\times M}$
are the residual posterior mean and covariance conditioned on $\boldsymbol{y}_{\mathcal{S}}$.
Moreover, $\boldsymbol{A}_{\mathcal{S}}\succ0$, and the posterior
is uniquely defined (see Appendix~\ref{subsec:Proof-of-Proposition}
for the proof).
\end{prop}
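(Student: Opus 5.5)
The plan is to use the standard Gaussian conjugacy argument, completing the square in $\boldsymbol{\delta}$, after first checking positive definiteness. The first step is to show $\boldsymbol{Q}_{g}\succ0$. Since $\boldsymbol{C}_{g}$ is symmetric and nonnegative, the Laplacian satisfies
\[
\boldsymbol{v}^{\mathsf{T}}\boldsymbol{L}_{g}\boldsymbol{v}=\tfrac{1}{2}\sum_{i,j}c_{ij}(v_{i}-v_{j})^{2}\geq0,
\]
so $\boldsymbol{L}_{g}\succeq0$. With $\tau>0$ and $\lambda\geq0$ this gives $\boldsymbol{Q}_{g}\succeq\tau\boldsymbol{I}\succ0$. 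Because $\boldsymbol{R}_{\mathcal{S}}\succ0$, its inverse is also positive definite, and hence $\boldsymbol{H}^{\mathsf{T}}_{\mathcal{S}}\boldsymbol{R}^{-1}_{\mathcal{S}}\boldsymbol{H}_{\mathcal{S}}\succeq0$. Adding the two yields $\boldsymbol{A}_{\mathcal{S}}\succeq\tau\boldsymbol{I}\succ0$. So $\boldsymbol{A}_{\mathcal{S}}$ is invertible and $\boldsymbol{\Sigma}_{\mathcal{S}}=\boldsymbol{A}^{-1}_{\mathcal{S}}$ is a valid covariance.

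The second step rewrites the likelihood in terms of the residual. Substituting $\boldsymbol{x}_{0}=\widehat{\boldsymbol{x}}_{0|t}+\boldsymbol{\delta}$ into (\ref{eq:measurement model}) gives $\boldsymbol{r}:=\boldsymbol{y}_{\mathcal{S}}-\boldsymbol{H}_{\mathcal{S}}\widehat{\boldsymbol{x}}_{0|t}=\boldsymbol{H}_{\mathcal{S}}\boldsymbol{\delta}+\boldsymbol{\eta}_{\mathcal{S}}$. Conditional on $\widehat{\boldsymbol{x}}_{0|t}$, which is treated as fixed, this means $p(\boldsymbol{y}_{\mathcal{S}}\mid\boldsymbol{\delta})=\mathcal{N}(\boldsymbol{r};\boldsymbol{H}_{\mathcal{S}}\boldsymbol{\delta},\boldsymbol{R}_{\mathcal{S}})$. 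I will state explicitly the modeling assumption that the noise $\boldsymbol{\eta}_{\mathcal{S}}$ is independent of $\boldsymbol{\delta}$ under the surrogate.

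The third step applies Bayes' rule. The unnormalized log-posterior is
\[
-\tfrac{1}{2}\boldsymbol{\delta}^{\mathsf{T}}\boldsymbol{Q}_{g}\boldsymbol{\delta}-\tfrac{1}{2}(\boldsymbol{r}-\boldsymbol{H}_{\mathcal{S}}\boldsymbol{\delta})^{\mathsf{T}}\boldsymbol{R}^{-1}_{\mathcal{S}}(\boldsymbol{r}-\boldsymbol{H}_{\mathcal{S}}\boldsymbol{\delta}).
\]
Expanding and collecting terms in $\boldsymbol{\delta}$ gives
\[
-\tfrac{1}{2}\boldsymbol{\delta}^{\mathsf{T}}\boldsymbol{A}_{\mathcal{S}}\boldsymbol{\delta}+\boldsymbol{\delta}^{\mathsf{T}}\boldsymbol{H}^{\mathsf{T}}_{\mathcal{S}}\boldsymbol{R}^{-1}_{\mathcal{S}}\boldsymbol{r}+\text{const}.
\]
Completing the square with $\boldsymbol{m}=\boldsymbol{A}^{-1}_{\mathcal{S}}\boldsymbol{H}^{\mathsf{T}}_{\mathcal{S}}\boldsymbol{R}^{-1}_{\mathcal{S}}\boldsymbol{r}$, which is legitimate by the first step, turns this into $-\tfrac{1}{2}(\boldsymbol{\delta}-\boldsymbol{m})^{\mathsf{T}}\boldsymbol{A}_{\mathcal{S}}(\boldsymbol{\delta}-\boldsymbol{m})+\text{const}$. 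This is exactly the Gaussian with the claimed mean and covariance in (\ref{eq:mean_n_var}).

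Uniqueness follows because the posterior density is proportional to $\exp$ of a strictly concave quadratic. Its normalizing constant is finite and positive, which is guaranteed by $\boldsymbol{A}_{\mathcal{S}}\succ0$, so the posterior is a well-defined, unique probability density. The only real obstacle is the positive-definiteness step. Specifically, the Laplacian quadratic-form identity needs the symmetry and nonnegativity of $\boldsymbol{C}_{g}$, and the argument needs $\tau>0$ strictly, since $\boldsymbol{L}_{g}$ is singular (because $\boldsymbol{L}_{g}\boldsymbol{1}=\boldsymbol{0}$) and $\boldsymbol{H}_{\mathcal{S}}$ has rank at most $|\mathcal{S}|<M$. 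Everything else is routine algebra.
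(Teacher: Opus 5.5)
Your proposal is correct and follows essentially the same route as the paper's proof. The paper likewise uses the Laplacian quadratic-form identity to get $\boldsymbol{A}_{\mathcal{S}}\succeq\tau\boldsymbol{I}\succ0$, then combines the residual likelihood with the Gaussian prior and completes the square to obtain mean $\boldsymbol{m}_{\mathcal{S},t}$ and covariance $\boldsymbol{A}^{-1}_{\mathcal{S}}$; your explicit assumption that $\boldsymbol{\eta}_{\mathcal{S}}$ is independent of $\boldsymbol{\delta}$ is one the paper uses only implicitly.
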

Proposition~\ref{prop:Graph-Bayesian-residual} establishes an explicit
refinement mechanism that jointly performs observation consistency
and graph propagation within a single update. While measurement innovations
enter at observed locations, $\boldsymbol{A}^{-1}_{\mathcal{S}}$
propagates their effects to graph-coupled unobserved variables. Although
$\boldsymbol{Q}_{g}$ is shared across reverse steps, $\boldsymbol{m}_{\mathcal{S},t}$
remains timestep dependent through the innovation.

The posterior mean also admits a variational interpretation
\begin{equation}
\boldsymbol{m}_{\mathcal{S},t}=\arg\min_{\boldsymbol{\delta}}\;\frac{\tau}{2}\|\boldsymbol{\delta}\|^{2}_{2}+\frac{\lambda}{2}\sum_{(i,j)\in\mathcal{E}}c_{ij}(\delta_{i}-\delta_{j})^{2}+\frac{1}{2}\left\Vert \boldsymbol{H}_{\mathcal{S}}\left(\widehat{\boldsymbol{x}}_{0|t}+\boldsymbol{\delta}\right)-\boldsymbol{y}_{\mathcal{S}}\right\Vert ^{2}_{\mathbf{R}^{-1}_{\mathcal{S}}},\label{eq:variational interpretation}
\end{equation}
where $\|\mathbf{v}\|^{2}_{\mathbf{B}}=\mathbf{v}^{\mathrm{T}}\mathbf{B}\mathbf{v}.$
It balances correction magnitude, graph-structured residual consistency,
and measurement fidelity. The graph-posterior correction is then injected
as
\begin{equation}
\ensuremath{\widetilde{\boldsymbol{x}}_{0|t}=\Pi_{[x_{\min},x_{\max}]}\left(\widehat{\boldsymbol{x}}_{0|t}+\rho_{g}\boldsymbol{m}_{\mathcal{S},t}\right)},\label{eq:correction}
\end{equation}
where $\rho_{g}>0$ controls correction strength, and $\Pi_{[\cdot,\cdot]}$
is element-wise clipping to the normalized range. The refined estimate
$\widetilde{\boldsymbol{x}}_{0|t}$ then replaces the original $\widehat{\boldsymbol{x}}_{0|t}$
in the subsequent reverse transition.

\subsection{Posterior-Aligned Active Acquisition}

\begin{figure*}[!t]
\centering\includegraphics[scale=0.507]{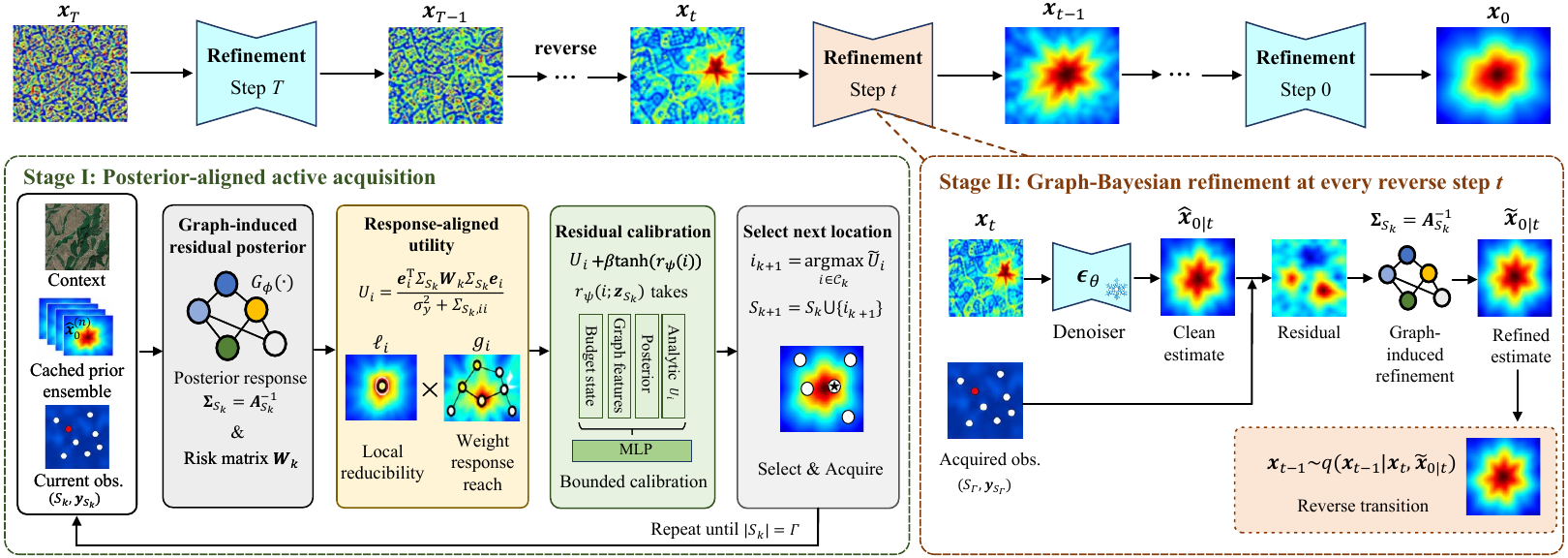} \caption{Overview of GPARA. Stage~I performs response-aligned active acquisition:
the graph posterior specifies how information propagates, while a
state-adaptive risk matrix specifies where uncertainty reduction is
valuable. A bounded residual scorer calibrates the analytic utility.
After reaching the sensing budget, Stage~II fixes the acquired observations
and reuses the same graph-posterior conditioner to refine prediction
throughout the reverse diffusion process.}
\label{fig:GPARA}
\end{figure*}

Let $\mathcal{S}_{k}$ be the observation set of acquisition round
$k$. GPARA evaluates prospective measurements through the same graph-posterior
covariance. Let $\boldsymbol{W}_{k}=\mathrm{diag}(w_{k,1},...,w_{k,M})\succeq0$
denote a diagonal risk-weighting matrix to specify the risk emphasis
of target variables, as detailed in Appendix~\ref{subsec:risk-weighting matrix}.
We define the weighted posterior risk as
\begin{equation}
\mathcal{R}(\mathcal{S}_{k})=\mathrm{tr}\left(\boldsymbol{W}_{k}\boldsymbol{\Sigma}_{\mathcal{S}_{k}}\right).
\end{equation}
While $\boldsymbol{\Sigma}_{\mathcal{S}_{k}}$ characterizes how a
prospective measurement propagates across the target through cross-variable
posterior dependence, $\boldsymbol{W}_{k}$ specifies where such reduction
is currently most valuable.

Consider an unobserved candidate $i\notin\mathcal{S}_{k}$ with measurement
noise variance $\sigma^{2}_{y}$, and let $\boldsymbol{e}_{i}\in\mathbb{R}^{M}$
be the $i$th canonical basis vector. Holding the current graph and
risk weights, its one-step utility is
\begin{equation}
U_{i}(\mathcal{S}_{k})=\mathcal{R}(\mathcal{S}_{k})-\mathcal{R}(\mathcal{S}_{k}\cup\{i\})=\frac{\boldsymbol{e}^{\mathsf{T}}_{i}\boldsymbol{\Sigma}_{\mathcal{S}_{k}}\boldsymbol{W}_{k}\boldsymbol{\Sigma}_{\mathcal{S}_{k}}\boldsymbol{e}_{i}}{\sigma^{2}_{y}+\Sigma_{\mathcal{S}_{k},ii}}=\underbrace{\frac{\Sigma^{2}_{\mathcal{S}_{k},ii}}{\sigma^{2}_{y}+\Sigma_{\mathcal{S}_{k},ii}}}_{\text{local reducibility \ensuremath{\ensuremath{\ell_{i}}}}}\times\underbrace{\frac{\sum^{M}_{j=1}w_{k,j}\Sigma^{2}_{\mathcal{S}_{k},ji}}{\Sigma^{2}_{\mathcal{S}_{k},ii}}}_{\text{weight response reach \ensuremath{g_{i}}}},
\end{equation}
where $\Sigma_{\mathcal{S},ji}$ denotes the posterior covariance
between node $i$ and $j$. The local factor $\ell_{i}$ measures
uncertainty removable at the candidate itself; $g_{i}$ captures how
broadly that information propagates through their weight correlations.
A moderately uncertain location can be more valuable than a highly
uncertain one if it influences a broader unresolved region. When $\boldsymbol{W}_{k}=\boldsymbol{I}$
and $\sigma^{2}_{y}=0$, it reduces to the LMMSE criterion used in
AdaSense \citep{elata_adasense_2024}. We next characterize how the
shared graph-posterior response relates posterior refinement to prospective
measurement value.
\begin{prop}
\label{prop:Proposition 2}(Posterior Propagation-Value Identity).
For an arbitrary reverse step $t$, define the posterior response
vector $\boldsymbol{u}_{i}=\boldsymbol{\Sigma}_{\mathcal{S}_{k}}\boldsymbol{e}_{i}/(\sigma^{2}_{y}+\Sigma_{\mathcal{S}_{k},ii})$
and the prospective innovation $\ensuremath{\nu_{i}=\boldsymbol{y}_{i}-\boldsymbol{e}^{\mathsf{T}}_{i}\left(\widehat{\boldsymbol{x}}_{0|t}+\boldsymbol{m}_{\mathcal{S}_{k},t}\right).}$
The posterior-mean change induced by measuring $i$ is thus $\Delta\widehat{\boldsymbol{x}}_{i}=\boldsymbol{u}_{i}\nu_{i}$,
and
\[
U_{i}(\mathcal{S}_{k})=(\sigma^{2}_{y}+\Sigma_{\mathcal{S}_{k},ii})\left\Vert \boldsymbol{u}_{i}\right\Vert ^{2}_{\boldsymbol{W}_{k}}=\mathbb{E}_{q_{\varPhi}}\left[\left\Vert \Delta\widehat{\boldsymbol{x}}_{i}\right\Vert ^{2}_{\boldsymbol{W}_{k}}\mid\mathcal{F}_{k}\right],
\]
where $\mathcal{F}_{k}$ denotes the acquisition state, including
all known quantities such as $\boldsymbol{c}_{g}$ (see Appendix~\ref{subsec:Proof-of-Proposition-1}).
\end{prop}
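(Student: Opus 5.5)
The plan is to derive all three claims from a rank-one update of the posterior in Proposition~\ref{prop:Graph-Bayesian-residual}. Adding candidate $i$ with noise variance $\sigma^{2}_{y}$ augments the precision to $\boldsymbol{A}_{\mathcal{S}_{k}\cup\{i\}}=\boldsymbol{A}_{\mathcal{S}_{k}}+\sigma^{-2}_{y}\boldsymbol{e}_{i}\boldsymbol{e}^{\mathsf{T}}_{i}$, assuming the noise of the new measurement is independent of the old ones so that $\boldsymbol{R}$ stays block diagonal.

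\textbf{Posterior-mean change.} Rather than using Sherman--Morrison, I would argue sequentially. By Proposition~\ref{prop:Graph-Bayesian-residual}, after round $k$ the residual $\boldsymbol{\delta}$ is Gaussian with mean $\boldsymbol{m}_{\mathcal{S}_{k},t}$ and covariance $\boldsymbol{\Sigma}_{\mathcal{S}_{k}}$. Treat this as the prior for one scalar observation $\boldsymbol{y}_{i}=\boldsymbol{e}^{\mathsf{T}}_{i}(\widehat{\boldsymbol{x}}_{0|t}+\boldsymbol{\delta})+\eta_{i}$. The standard Gaussian (Kalman) conditioning formulas then give the new mean $\boldsymbol{m}_{\mathcal{S}_{k},t}+\boldsymbol{\Sigma}_{\mathcal{S}_{k}}\boldsymbol{e}_{i}(\sigma^{2}_{y}+\Sigma_{\mathcal{S}_{k},ii})^{-1}\nu_{i}$. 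The innovation here is exactly $\nu_{i}$, because the predicted measurement is $\boldsymbol{e}^{\mathsf{T}}_{i}(\widehat{\boldsymbol{x}}_{0|t}+\boldsymbol{m}_{\mathcal{S}_{k},t})$. This yields $\Delta\widehat{\boldsymbol{x}}_{i}=\boldsymbol{u}_{i}\nu_{i}$, and the same step gives the covariance downdate $\boldsymbol{\Sigma}_{\mathcal{S}_{k}\cup\{i\}}=\boldsymbol{\Sigma}_{\mathcal{S}_{k}}-\boldsymbol{\Sigma}_{\mathcal{S}_{k}}\boldsymbol{e}_{i}\boldsymbol{e}^{\mathsf{T}}_{i}\boldsymbol{\Sigma}_{\mathcal{S}_{k}}/(\sigma^{2}_{y}+\Sigma_{\mathcal{S}_{k},ii})$. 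To confirm that sequential conditioning matches the batch formula in Proposition~\ref{prop:Graph-Bayesian-residual}, I would check that both have precision $\boldsymbol{A}_{\mathcal{S}_{k}\cup\{i\}}$ and the same linear term $\boldsymbol{H}^{\mathsf{T}}\boldsymbol{R}^{-1}(\cdot)$. This agreement is immediate in information form.

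\textbf{First equality.} Take the trace against $\boldsymbol{W}_{k}$ in the downdate and use the cyclic property. This gives $U_{i}=\boldsymbol{e}^{\mathsf{T}}_{i}\boldsymbol{\Sigma}\boldsymbol{W}_{k}\boldsymbol{\Sigma}\boldsymbol{e}_{i}/(\sigma^{2}_{y}+\Sigma_{ii})$, which is the displayed utility. Substituting $\boldsymbol{\Sigma}\boldsymbol{e}_{i}=(\sigma^{2}_{y}+\Sigma_{ii})\boldsymbol{u}_{i}$ then gives $U_{i}=(\sigma^{2}_{y}+\Sigma_{ii})\|\boldsymbol{u}_{i}\|^{2}_{\boldsymbol{W}_{k}}$.

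\textbf{Second equality.} Since $\boldsymbol{u}_{i}$ is $\mathcal{F}_{k}$-measurable, $\mathbb{E}[\|\boldsymbol{u}_{i}\nu_{i}\|^{2}_{\boldsymbol{W}_{k}}\mid\mathcal{F}_{k}]=\|\boldsymbol{u}_{i}\|^{2}_{\boldsymbol{W}_{k}}\mathbb{E}[\nu_{i}^{2}\mid\mathcal{F}_{k}]$. Under the matched surrogate $q_{\varPhi}$ we have $\nu_{i}=\boldsymbol{e}^{\mathsf{T}}_{i}(\boldsymbol{\delta}-\boldsymbol{m}_{\mathcal{S}_{k},t})+\eta_{i}$. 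The first term has conditional mean zero and variance $\Sigma_{ii}$, and $\eta_{i}$ is independent with variance $\sigma^{2}_{y}$. Hence $\mathbb{E}[\nu_{i}^{2}\mid\mathcal{F}_{k}]=\sigma^{2}_{y}+\Sigma_{ii}$, which proves the identity. The result holds for every $t$ because $\boldsymbol{\Sigma}_{\mathcal{S}_{k}}$ does not depend on $t$, and the $t$-dependence of $\nu_{i}$ cancels in the predictive variance.

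\textbf{Main obstacle.} The delicate point is not the algebra but stating precisely which distribution the expectation is taken under. We need $\mathcal{F}_{k}$ to contain $\widehat{\boldsymbol{x}}_{0|t}$, $\boldsymbol{y}_{\mathcal{S}_{k}}$ and $\boldsymbol{c}_{g}$. We also need $\boldsymbol{\delta}\mid\mathcal{F}_{k}$ to be distributed as the surrogate posterior of Proposition~\ref{prop:Graph-Bayesian-residual}, with $\eta_{i}$ independent of it. With these assumptions stated explicitly, the predictive variance of $\nu_{i}$ is exactly $\sigma^{2}_{y}+\Sigma_{ii}$; without them the identity can fail. If $\sigma^{2}_{y}=0$ is allowed, positivity of the denominator still holds because $\Sigma_{ii}>0$, which follows from $\boldsymbol{A}_{\mathcal{S}_{k}}\succ0$.
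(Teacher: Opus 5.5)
Your proposal is correct and follows essentially the same route as the paper's proof: a rank-one precision update, Gaussian conditioning on the scalar innovation to obtain $\Delta\widehat{\boldsymbol{x}}_{i}=\boldsymbol{u}_{i}\nu_{i}$, the covariance downdate traced against $\boldsymbol{W}_{k}$ via cyclic invariance, and $\mathbb{E}[\nu_{i}^{2}\mid\mathcal{F}_{k}]=\sigma^{2}_{y}+\Sigma_{\mathcal{S}_{k},ii}$ with $\boldsymbol{u}_{i}$ fixed given $\mathcal{F}_{k}$. The only minor differences are that you read the covariance downdate off the Kalman step rather than the rank-one inverse identity, and you explicitly check sequential-versus-batch consistency in information form, which the paper leaves implicit.
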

Proposition~\ref{prop:Proposition 2} shows acquisition-refinement
alignment: the same posterior response that propagates a newly acquired
innovation also determines its expected weighted acquisition value.
\begin{cor}
\label{cor:Ranking Consistency}(Matched-Model Ranking Consistency).
For the unclipped graph-refined estimate $\widetilde{\boldsymbol{x}}_{0|t,\mathcal{S}_{k}}$,
define the one-step reconstruction improvement induced by acquiring
candidate $i$ as
\[
\Delta_{i}(\rho_{g})=\left\Vert \boldsymbol{x}_{0}-\widetilde{\boldsymbol{x}}_{0|t,\mathcal{S}_{k}}\right\Vert ^{2}_{\boldsymbol{W}_{k}}-\left\Vert \boldsymbol{x}_{0}-\widetilde{\boldsymbol{x}}_{0|t,\mathcal{S}_{k}\cup\{i\}}\right\Vert ^{2}_{\boldsymbol{W}_{k}}.
\]
Under the matched graph-Gaussian surrogate,
\[
\mathbb{E}_{q_{\varPhi}}\left[\Delta_{i}(\rho_{g})\mid\ensuremath{\mathcal{F}_{k}}\right]=(2\rho_{g}-\rho^{2}_{g})U_{i}(\mathcal{S}_{k}).
\]
For $0<\rho_{g}<2$, maximizing $U_{i}(\mathcal{S}_{k})$ thus maximizes
its expected improvement (see Appendix~\ref{subsec:Proof-of-Corollary}).
\end{cor}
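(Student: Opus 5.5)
We work under the matched surrogate: conditional on $\mathcal{F}_{k}$, the true residual $\boldsymbol{\delta}=\boldsymbol{x}_{0}-\widehat{\boldsymbol{x}}_{0|t}$ is distributed as $\mathcal{N}(\boldsymbol{m}_{\mathcal{S}_{k},t},\boldsymbol{\Sigma}_{\mathcal{S}_{k}})$ by Proposition~\ref{prop:Graph-Bayesian-residual}, and $y_{i}=\boldsymbol{e}_{i}^{\mathsf{T}}\boldsymbol{x}_{0}+\eta_{i}$ with $\eta_{i}\sim\mathcal{N}(0,\sigma_{y}^{2})$ independent of $\boldsymbol{\delta}$. Define the posterior error before acquisition as $\boldsymbol{\epsilon}=\boldsymbol{\delta}-\boldsymbol{m}_{\mathcal{S}_{k},t}$, so that $\boldsymbol{\epsilon}\mid\mathcal{F}_{k}\sim\mathcal{N}(\boldsymbol{0},\boldsymbol{\Sigma}_{\mathcal{S}_{k}})$. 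The innovation is then $\nu_{i}=\boldsymbol{e}_{i}^{\mathsf{T}}\boldsymbol{\epsilon}+\eta_{i}$.

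\textbf{Step 1: rewrite both reconstruction errors.} Without clipping, $\widetilde{\boldsymbol{x}}_{0|t,\mathcal{S}}=\widehat{\boldsymbol{x}}_{0|t}+\rho_{g}\boldsymbol{m}_{\mathcal{S},t}$. By Proposition~\ref{prop:Proposition 2}, the rank-one (Kalman/Sherman--Morrison) update gives $\boldsymbol{m}_{\mathcal{S}_{k}\cup\{i\},t}=\boldsymbol{m}_{\mathcal{S}_{k},t}+\boldsymbol{u}_{i}\nu_{i}$. Hence
\[
\boldsymbol{a}:=\boldsymbol{x}_{0}-\widetilde{\boldsymbol{x}}_{0|t,\mathcal{S}_{k}}=\boldsymbol{\delta}-\rho_{g}\boldsymbol{m}_{\mathcal{S}_{k},t},\qquad
\boldsymbol{x}_{0}-\widetilde{\boldsymbol{x}}_{0|t,\mathcal{S}_{k}\cup\{i\}}=\boldsymbol{a}-\rho_{g}\boldsymbol{u}_{i}\nu_{i}.
\]
Expanding the weighted norms yields
\[
\Delta_{i}(\rho_{g})=2\rho_{g}\nu_{i}\,\boldsymbol{u}_{i}^{\mathsf{T}}\boldsymbol{W}_{k}\boldsymbol{a}-\rho_{g}^{2}\nu_{i}^{2}\|\boldsymbol{u}_{i}\|_{\boldsymbol{W}_{k}}^{2}.
\]

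\textbf{Step 2: take conditional expectations.} The quadratic term follows from $\mathbb{E}[\nu_{i}^{2}\mid\mathcal{F}_{k}]=\Sigma_{\mathcal{S}_{k},ii}+\sigma_{y}^{2}$ together with Proposition~\ref{prop:Proposition 2}, which give
\[
\mathbb{E}\bigl[\nu_{i}^{2}\|\boldsymbol{u}_{i}\|_{\boldsymbol{W}_{k}}^{2}\mid\mathcal{F}_{k}\bigr]=U_{i}(\mathcal{S}_{k}).
\]
For the cross term, write $\boldsymbol{a}=\boldsymbol{\epsilon}+(1-\rho_{g})\boldsymbol{m}_{\mathcal{S}_{k},t}$. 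Given $\mathcal{F}_{k}$, the vector $\boldsymbol{m}_{\mathcal{S}_{k},t}$ is deterministic and $\mathbb{E}[\nu_{i}]=0$, so the $\boldsymbol{m}$ part contributes nothing. The remaining part uses $\mathbb{E}[\nu_{i}\boldsymbol{\epsilon}]=\boldsymbol{\Sigma}_{\mathcal{S}_{k}}\boldsymbol{e}_{i}=(\sigma_{y}^{2}+\Sigma_{\mathcal{S}_{k},ii})\boldsymbol{u}_{i}$, since $\eta_{i}$ is independent of $\boldsymbol{\epsilon}$. Therefore the cross term has expectation $2\rho_{g}(\sigma_{y}^{2}+\Sigma_{\mathcal{S}_{k},ii})\|\boldsymbol{u}_{i}\|_{\boldsymbol{W}_{k}}^{2}=2\rho_{g}U_{i}(\mathcal{S}_{k})$.

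Combining the two terms gives $\mathbb{E}_{q_{\varPhi}}[\Delta_{i}(\rho_{g})\mid\mathcal{F}_{k}]=(2\rho_{g}-\rho_{g}^{2})U_{i}(\mathcal{S}_{k})$. For $0<\rho_{g}<2$ the factor $2\rho_{g}-\rho_{g}^{2}$ is strictly positive and does not depend on $i$. Maximizing $U_{i}(\mathcal{S}_{k})$ over candidates therefore maximizes the expected improvement.

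\textbf{Main obstacle.} The delicate point is the cross term in Step 2. Its value depends on treating $\boldsymbol{m}_{\mathcal{S}_{k},t}$ as $\mathcal{F}_{k}$-measurable, and on the matched-model assumption that $\boldsymbol{\epsilon}$ is centered with covariance exactly $\boldsymbol{\Sigma}_{\mathcal{S}_{k}}$. Only under that assumption does the $(1-\rho_{g})\boldsymbol{m}$ bias drop out and the covariance $\mathbb{E}[\nu_{i}\boldsymbol{\epsilon}]$ reduce exactly to the response $\boldsymbol{u}_{i}$. The argument also requires no clipping, since clipping would make the update nonlinear and break the expansion in Step 1.
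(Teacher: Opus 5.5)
Your proposal is correct and follows the paper's proof essentially step for step. You use the same rank-one mean update $\widetilde{\boldsymbol{x}}_{0|t,\mathcal{S}_{k}\cup\{i\}}=\widetilde{\boldsymbol{x}}_{0|t,\mathcal{S}_{k}}+\rho_{g}\boldsymbol{u}_{i}\nu_{i}$, the same expansion of the weighted squared-error difference, and the same decomposition $\boldsymbol{\delta}-\rho_{g}\boldsymbol{m}_{\mathcal{S}_{k},t}=(\boldsymbol{\delta}-\boldsymbol{m}_{\mathcal{S}_{k},t})+(1-\rho_{g})\boldsymbol{m}_{\mathcal{S}_{k},t}$, so that the bias term vanishes by the zero-mean innovation and the cross-moment reduces to $\boldsymbol{\Sigma}_{\mathcal{S}_{k}}\boldsymbol{e}_{i}$. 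The paper uses the matched-model assumption ($\boldsymbol{\delta}\mid\mathcal{F}_{k}\sim\mathcal{N}(\boldsymbol{m}_{\mathcal{S}_{k},t},\boldsymbol{\Sigma}_{\mathcal{S}_{k}})$ with $\boldsymbol{m}_{\mathcal{S}_{k},t}$ known given $\mathcal{F}_{k}$) implicitly through (B.10)--(B.11); you state it explicitly.
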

Corollary~\ref{cor:Ranking Consistency} further shows that, under
the matched surrogate, the candidate with the largest utility $U_{i}(\mathcal{S}_{k})$
is expected to maximize the one-step improvement in the same loss.
This result motivates using such utility as the acquisition anchor.
In the diffusion process, where clipping, finite-step denoising, approximate
linear solves, and surrogate mismatch may break the matched-model
assumptions, we use $\ensuremath{U_{i}}$ as an explicit acquisition
anchor and permit residual calibration from it.

Accordingly, GPARA retains $U_{i}(\mathcal{S}_{k})$ as the analytic
anchor and defines the final acquisition score as
\[
\widetilde{U}_{i}(\mathcal{S}_{k})=\bar{U}_{i}(\mathcal{S}_{k})+\beta_{\mathrm{acq}}\tanh\left(r_{\psi}(i;\ensuremath{\mathbf{z}_{\mathcal{S}_{k}}})\right),
\]
where $r_{\psi}(\cdot)$ is the scorer network, $\bar{U}_{i}$ is
the normalized analytic utility, $\mathbf{z}_{\mathcal{S}_{k}}$ denotes
the acquisition state, including context, graph-posterior statistics,
observation information, and the budget, and $\beta_{\mathrm{acq}}>0$
bounds its magnitude. When $r_{\psi}(i;\ensuremath{\mathbf{z}_{\mathcal{S}_{k}}})=0$,
it reduces exactly to the analytic anchor.

Starting from $\mathcal{S}_{0}=\varnothing$, GPARA progressively
acquires measurements according to the current calibrated utility.
The underlying single-point selection rule is
\[
i_{k+1}=\arg\max_{i\in\mathcal{C}_{k}}\widetilde{U}_{i}(\mathcal{S}_{k}),\hspace*{0.3cm}\mathcal{S}_{k+1}=\mathcal{S}_{k}\cup\{i_{k+1}\},
\]
where $\mathcal{C}_{k}\subseteq\mathcal{V}\setminus\mathcal{S}_{k}$
denotes the admissible unobserved candidates at step $k$. For computational
efficiency, we additionally provide a batched acquisition strategy
that selects a small group of candidates at each round $k$ and jointly
assimilates them before refreshing the posterior and acquisition scores.
The single-point utility remains the primary acquisition criterion;
batching is introduced only as a computational tradeoff to reduce
update frequency. Further details are provided in Appendix~\ref{subsec:Batched-Acquisition}.

\paragraph{Implementation and training.}

GPARA consists of two networks trained offline. The graph predictor
$G_{\phi}$ learns edge conductances through differentiable unrolled
graph conditioning with reconstruction and graph-regularization objectives.
The residual scorer $r_{\psi}$ is trained to calibrate the analytic
utility using a training-only one-step improvement teacher and distillation,
ranking, and calibration losses. No parameters are updated at deployment.
A small prior ensemble is generated once and cached for adaptive risk
weighting. Posterior corrections use fixed-depth Jacobi iterations,
and covariance-dependent utilities are evaluated matrix-free using
stochastic probing and PCG without forming a dense posterior covariance
matrix. Algorithm~\ref{alg:gpara} summarizes the full procedure,
with numerical settings and network training deferred to Appendix~\ref{sec:Proofs-and-Numerical} and \ref{sec:Network-Architectures--Offline}.

\begin{algorithm}[t]
\caption{GPARA: Graph-Posterior-Aligned Active Acquisition and Reconstruction}
\label{alg:gpara}
\small
\begin{algorithmic}[1]

\Require Context $\boldsymbol{c}$; frozen networks $\boldsymbol{\epsilon}_{\theta}$, $G_{\phi}$, and $r_{\psi}$; sensing budget $\mathit{\Gamma}$; ensemble size $L$; number of diffusion samples $N$; measurement interface and noise model

\Ensure Reconstruction of final target $\hat{\boldsymbol{x}}_0$ and the acquired measurement set $\mathcal{S}_\mathit{\Gamma}$

\Statex \textbf{Initialization}

\State Set
$k \gets 0$, and $\mathcal{S}_0 \gets \varnothing$. Generate $\mathbf{C}_g$ from $G_{\phi}$,
compute $\mathbf{L}_g$ and $\mathbf{Q}_g \gets \tau\mathbf{I}+\lambda\mathbf{L}_g$

\State Generate and cache $L$ DDPM prior samples $\{\widehat{\boldsymbol{x}}^{(l)}_{0}\}^{L}_{l=1}$ without measurements

\Statex \textbf{Stage I: Posterior-aligned active acquisition}

\While{$|\mathcal{S}_k|<\mathit{\Gamma}$}

    \State Construct
    $\boldsymbol{A}_{\mathcal{S}_k}
    \gets
    \mathbf{Q}_g
    +
    \mathbf{H}_{\mathcal{S}_k}^{\mathsf T}
    \mathbf{R}_{\mathcal{S}_k}^{-1}
    \mathbf{H}_{\mathcal{S}_k}$, and obtain
    $\{\widetilde{\boldsymbol{x}}^{(l)}_{0|\mathcal{S}_{k}}\}^{L}_{l=1}$

    \State Compute the ensemble mean and empirical variance, and
    construct $\boldsymbol{W}_k$ (see Appendix~\ref{subsec:risk-weighting matrix})

    \State Estimate $U_i(\mathcal{S}_k)$ and covariance statistics using stochastic probing and PCG (see Appendix~\ref{subsec:Matrix-Free-Graph-Solves})

    \State Compute, for each admissible candidate,
    $\widetilde{U}_i(\mathcal{S}_k)
    \gets
    \bar{U}_i
    +
    \beta_{\mathrm{acq}}\tanh\!\bigl(
    r_{\psi}(i;\mathbf{z}_{\mathcal{S}_k})
    \bigr)$

    \State Select $i_{k+1}\gets \arg\max_{i\in\mathcal{C}_{k}}    \widetilde{U}_i(\mathcal{S}_k)$, acquire $y_{i_{k+1}}$ and $\mathcal{S}_{k+1}\gets\mathcal{S}_k\cup\{i_{k+1}\}$

    \State $k\gets k+1$

\EndWhile

\Statex \textbf{Stage II: Final graph-refined diffusion reconstruction}

\State Reconstruct $\boldsymbol{A}_{\mathcal{S}_\mathit{\Gamma}}$ using the final
observation set and keep it fixed throughout reconstruction

\State Initialize $N$ reverse diffusion trajectories from Gaussian noise

\For{each reverse transition $t$}

    \For{$n=1,\ldots,N$}

        \State Obtain the clean-state prediction
        $\widehat{\boldsymbol{x}}_{0|t}^{(n)}$ from
        $\boldsymbol{\epsilon}_{\theta}
        (\hat{\boldsymbol{x}}_t^{(n)},t,\mathbf{c})$

        \State Solve the graph-conditioning system
        for $\widehat{\mathbf{m}}_{\mathcal{S}_\mathit{\Gamma},t}^{(n)}$
        with
        $\mathbf{y}_{\mathcal{S}_\mathit{\Gamma}}$ by Proposition~\ref{prop:Graph-Bayesian-residual}

        \State Update
        $\widetilde{\boldsymbol{x}}_{0|t,\mathcal{S}_\mathit{\Gamma}}^{(n)}
        \gets
        \Pi_{[x_{\min},x_{\max}]}
        \!\left(
        \widehat{\boldsymbol{x}}_{0|t}^{(n)}
        +
        \rho_g\widehat{\mathbf{m}}_{\mathcal{S}_\mathit{\Gamma},t}^{(n)}
        \right)$

        \State Apply the DDPM transition
        with $\widetilde{\boldsymbol{x}}_{0|t,\mathcal{S}_\mathit{\Gamma}}^{(n)}$ to obtain
        $\hat{\boldsymbol{x}}_{t-1}^{(n)}$ by Eq.~(\ref{eq:reverse transition})

    \EndFor

\EndFor

\State Collect the terminal samples
$\{\hat{\boldsymbol{x}}_0^{(n)}\}_{n=1}^{N}$ and compute
$\hat{\boldsymbol{x}}_0
\gets
N^{-1}\sum_{n=1}^{N}\hat{\boldsymbol{x}}_0^{(n)}$

\State \Return
$\hat{\boldsymbol{x}}_0,\mathcal{S}_\mathit{\Gamma}$

\end{algorithmic}
\end{algorithm}
\section{Experiments}\label{sec:experiments}

We evaluate \ProposedM on two representative tasks: radio field reconstruction
(RFR) and RGB-guided depth completion (DC). Each task uses a separately
pretrained DDPM as the frozen backbone, with a five-step reverse process
for generation. All experiments are conducted on an NVIDIA RTX 4060
Ti GPU (16 GB) with an Intel Core i7-10700K CPU and 32 GB of RAM.

\subsection{Experimental Setup}\label{sec:experimental_setup}

\paragraph{Datasets and metrics.}

RFR uses RadioMapSeer IRT4 \citep{RadioMapSeer_yapar2024}, with building
geometry and transmitter configuration as context. Each target is
a $128\times128$ radio map, with $1,122/140/140$ samples for training/validation/test.
DC uses NYU Depth V2 \citep{Silberman:ECCV12}, with RGB images as
context. The RGB-depth pairs are resized to $192\times256$, yielding
$1,349$ training images and $100$ test images from $35$ unseen
scenes. We report root mean squared error (RMSE) and mean absolute
error (MAE) for both tasks, with peak signal-to-noise ratio (PSNR)
and structural similarity index (SSIM) for RFR, and absolute relative
error (AbsRel) and threshold accuracy $\delta_{1}$ for DC.

\paragraph{Baselines.}

Fixed-mask methods share the same pretrained DDPM within each task
and receive identical random measurements. Baselines are DPS \citep{chung_dps_2023},
DDNM \citep{wang_ddnm_2023}, FPS \citep{dou_fps_2024}, and DiffPIR
\citep{zhu_diffpir_2023}. For active acquisition, we compare with
ADS \citep{nolan_activediffusion_2025} and AdaSense \citep{elata_adasense_2024}.
AdaSense is paired with each posterior backend, including the proposed
graph-Bayesian posterior.

\begin{figure*}[!t]
\centering \includegraphics[scale=0.4]{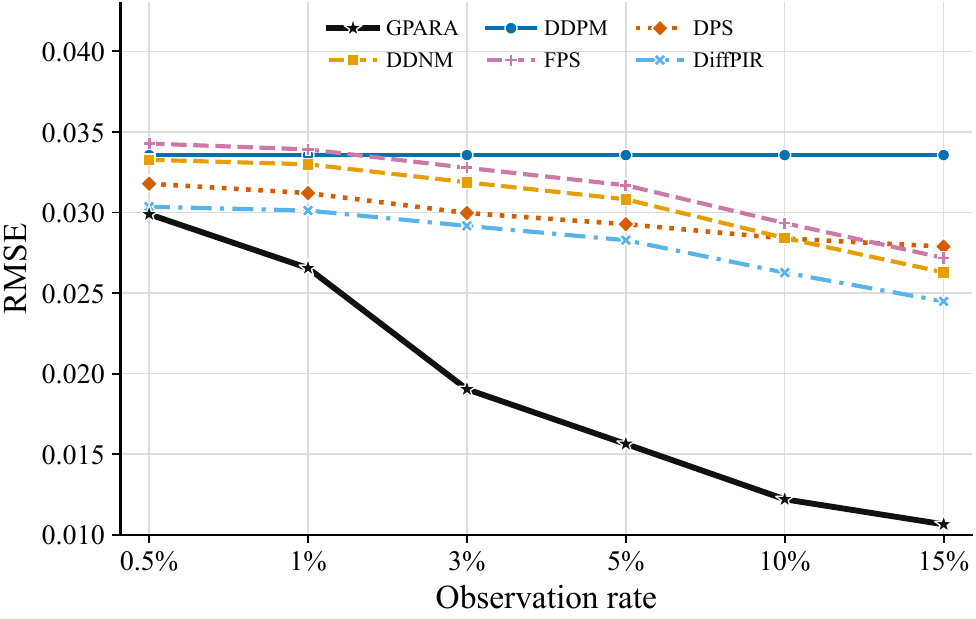}
\includegraphics[scale=0.4]{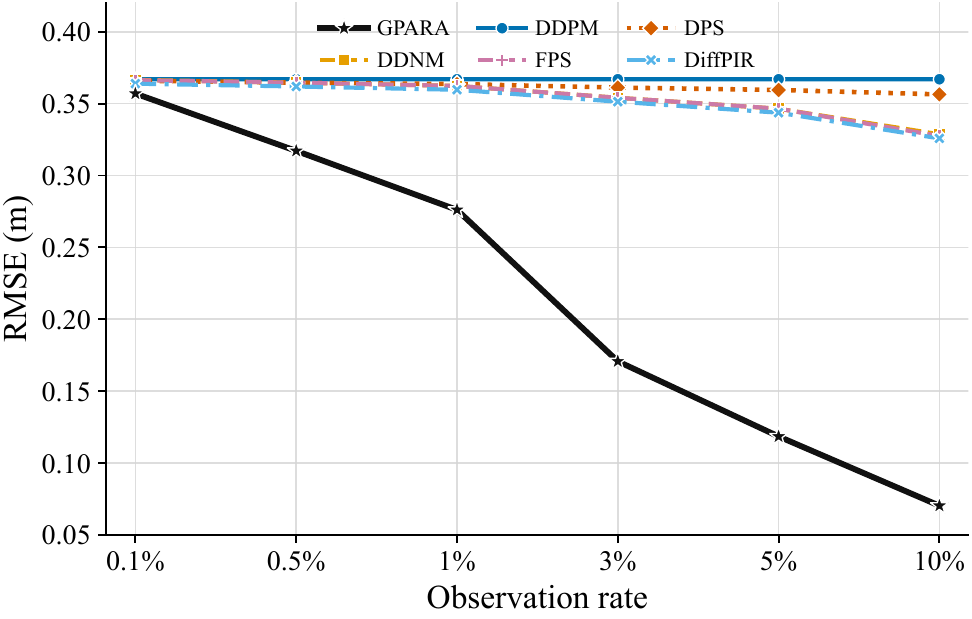}\caption{Posterior refinement across observation rates. RMSE as a function
of the fraction of observed target locations for (a) RFR and (b) DC.
All methods use the same frozen DDPM prior and identical randomly
sampled observations at every observation rate.}
\label{fig:posterior_rates}
\end{figure*}

\begin{table}
\caption{Posterior refinement under 491 matched random observations, approximately
$3\%$ of RFR and $1\%$ of DC target entries. RMSE is reported as
mean $\pm$ standard deviation across five independent seeds; other
metrics show the mean values.}

\centering\label{tab:posterior_refinement} \resizebox{1\textwidth}{!}{%
\renewcommand\arraystretch{1.05}
\begin{tabular}{
p{2.2cm}<{\centering}|
p{2.5cm}<{\centering}p{1.35cm}<{\centering}p{1.35cm}<{\centering}p{1.35cm}<{\centering}|
p{2.5cm}<{\centering}p{1.35cm}<{\centering}p{1.35cm}<{\centering}p{1.35cm}<{\centering}
}
\hline

\multirow{2}{*}{Method} &
\multicolumn{4}{c|}{Radio-field reconstruction} &
\multicolumn{4}{c}{Depth completion} \\

\cline{2-9} 

& RMSE $\downarrow$ & MAE $\downarrow$  & SSIM $\uparrow$ & PSNR $\uparrow$
& RMSE $\downarrow$ & MAE $\downarrow$ & AbsRel $\downarrow$ & $\delta_1$ $\uparrow$ \\

\hline

DDPM
& 0.035 $\pm$ 0.003 & 0.025 & 0.926 & 30.2 & 0.350 $\pm$ 0.005 & 0.273 & 0.113 & 0.878 \\

DPS
& 0.031 $\pm$ 0.002 & 0.021 & 0.937 & 31.1 & 0.347 $\pm$ 0.005 & 0.271 & 0.112 & 0.881\\

DDNM
& 0.033 $\pm$ 0.003 &0.023 &0.919 &30.6 &0.344 $\pm$ 0.005 & 0.267& 0.110& 0.883 \\

FPS
& 0.034 $\pm$ 0.003 & 0.024 & 0.921 & 30.4 & 0.345 $\pm$ 0.004 &0.269 &0.111 & 0.883 \\

DiffPIR
& 0.030 $\pm$ 0.002 & 0.021 & 0.934 & 31.2 & 0.342 $\pm$ 0.004 & 0.266 & 0.110 & 0.885 \\

GPARA (Ours)
& \textbf{0.019 $\pm$ 0.0007} & \textbf{0.011} & \textbf{0.955} & \textbf{34.8} & \textbf{0.262 $\pm$ 0.002} & \textbf{0.196} & \textbf{0.081} & \textbf{0.951} \\

\hline
\end{tabular}
}
\end{table}

\subsection{Fixed-Observation Posterior Refinement}\label{sec:posterior_refinement_experiment}

We first evaluate posterior refinement under identical randomly sampled
observations.

Table~\ref{tab:posterior_refinement} compares how each method uses
the available evidence. On RFR, \ProposedM reduces RMSE and MAE by
$36.7\%$ and $47.6\%$, respectively, relative to DiffPIR, the strongest
baseline here. On DC, \ProposedM reduces RMSE and AbsRel by $23.4\%$
and $26.4\%$, respectively, while increasing $\delta_{1}$ by $6.6$
percentage points. Relative to DDPM prior, the RMSE reductions are
$45.7\%$ on RFR and $25.1\%$ on DC. The consistent improvements
indicate that the proposed graph-posterior refinement makes more effective
use of the measurements. Note that all solvers are evaluated in an
accelerated few-step regime, with additional results under larger
reverse-step budgets reported in Appendix~\ref{sec:Additional-results}.

Fig.~\ref{fig:posterior_rates} further evaluates RMSE across observation
rates. It is observed that \ProposedM benefits more strongly from
additional measurements than the competing baselines as the observation
rate increases. On RFR, the gain over the strongest baseline grows
from a modest margin in sparse regime to over $60\%$ at higher rates;
on DC, the gain similarly expands and reaches nearly $80\%$.

\subsection{Posterior-Aligned Active Acquisition}\label{sec:active_acquisition_experiment}

\begin{figure}[!t]
\centering\includegraphics[scale=0.358]{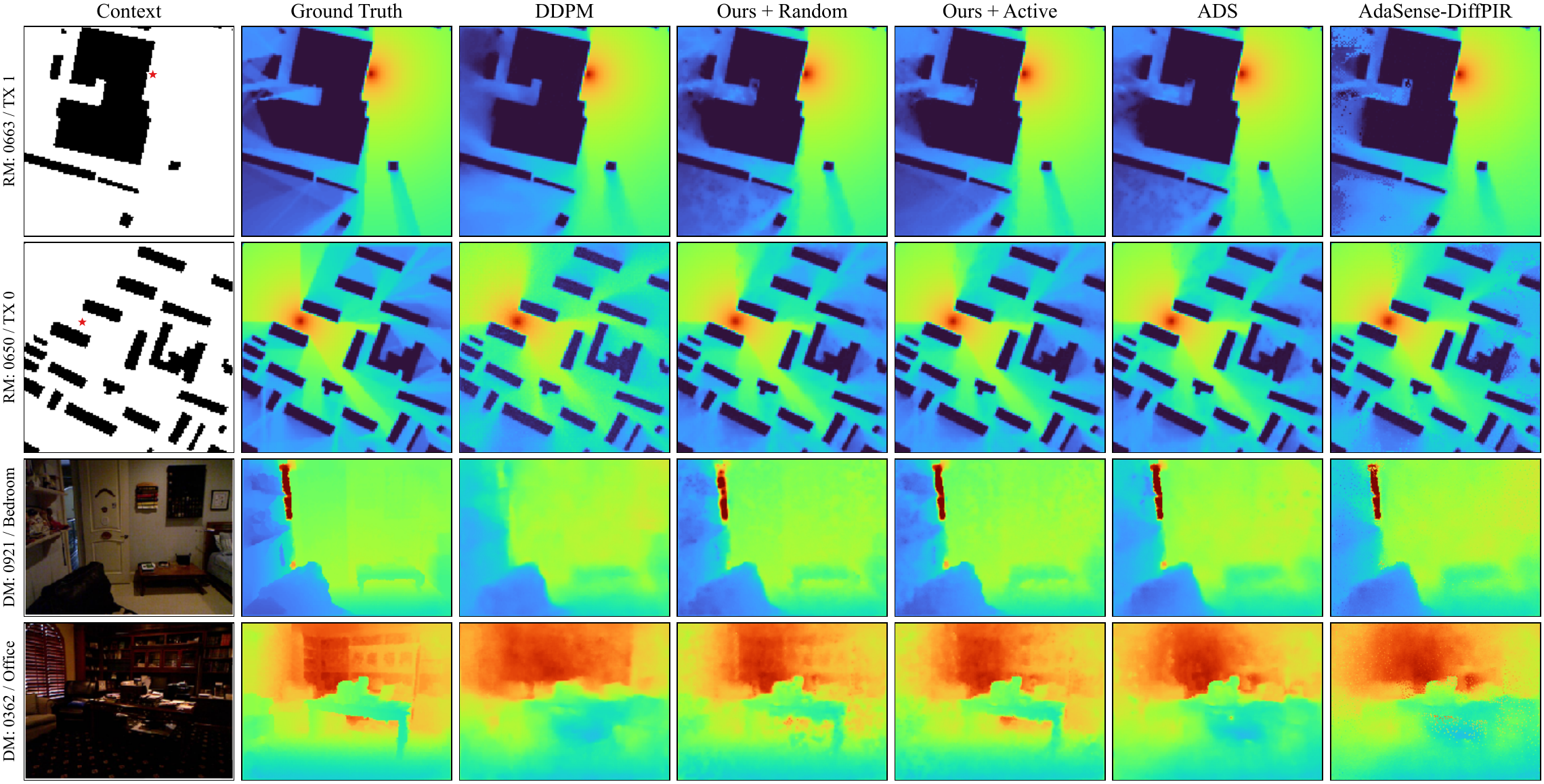}
\caption{Qualitative comparison of active acquisition and reconstruction. The
first two rows show RFR examples with $10\%$ observations, and the
last two rows are DC examples under $5\%$. }
\label{fig:active_qualitative}
\end{figure}

\begin{table}
\caption{Active acquisition with a matched 491-point sensing budget across
five independent seeds.}

\centering\label{tab:active_acquisition} \resizebox{1\textwidth}{!}{%
\renewcommand\arraystretch{1.05}
\begin{tabular}{
p{2.9cm}<{\centering}|
p{2.5cm}<{\centering}p{1.35cm}<{\centering}p{1.35cm}<{\centering}p{1.35cm}<{\centering}|
p{2.5cm}<{\centering}p{1.35cm}<{\centering}p{1.35cm}<{\centering}p{1.35cm}<{\centering}
}
\hline

\multirow{2}{*}{Method} &
\multicolumn{4}{c|}{Radio-field reconstruction} &
\multicolumn{4}{c}{Depth completion} \\

\cline{2-9}

& RMSE $\downarrow$ & MAE $\downarrow$ & SSIM $\uparrow$ & PSNR $\uparrow$
& RMSE $\downarrow$ & MAE $\downarrow$ & AbsRel $\downarrow$ & $\delta_1$ $\uparrow$ \\

\hline

ADS
& 0.026 $\pm$ 0.001 & 0.018 & 0.947 & 33.1
& 0.338 $\pm$ 0.002 & 0.267 & 0.111 & 0.882 \\

AdaSense + DPS
& 0.026 $\pm$ 0.001 & 0.018 & 0.947 & 33.2
& 0.339 $\pm$ 0.002 & 0.268 & 0.111 & 0.881 \\

AdaSense + DDNM
& 0.028 $\pm$ 0.002 & 0.020 & 0.937 & 32.7
& 0.336 $\pm$ 0.002 & 0.267 & 0.110 & 0.883 \\

AdaSense + FPS
& 0.029 $\pm$ 0.002 & 0.021 & 0.935 & 32.5
& 0.336 $\pm$ 0.002 & 0.266 & 0.110 & 0.883 \\

AdaSense + DiffPIR
& 0.026 $\pm$ 0.001 & 0.018 & 0.948 & 33.4
& 0.334 $\pm$ 0.002 & 0.264 & 0.110 & 0.885 \\

AdaSense + Graph
& 0.018 $\pm$ 0.0006 & 0.013 & 0.956 & 36.4
& 0.268 $\pm$ 0.007 & 0.211 & 0.089 & 0.933 \\

GPARA (Ours)
& \textbf{0.014 $\pm$ 0.0004} & \textbf{0.009} & \textbf{0.962} & \textbf{38.1}
& \textbf{0.236 $\pm$ 0.006} & \textbf{0.187} & \textbf{0.079} & \textbf{0.958} \\

\hline
\end{tabular}
}
\end{table}

We next allow each method to actively select measurements under the
same sensing budget. “AdaSense + Graph” uses the proposed graph-based
reconstruction backend. All methods are evaluated on the common prediction
region shared across policies.

From Table~\ref{tab:active_acquisition}, relative to the original
DDPM prior, \ProposedM reduces RMSE by $60.0\%$ on RFR and $32.6\%$
on DC. Compared with random sensing, response-aligned acquisition
provides a further $26.3\%$ and $9.9\%$ RMSE reduction, respectively.
\ProposedM also outperforms the strongest baseline by $46.2\%$ and
$29.3\%$ on RMSE. Importantly, compared with AdaSense+Graph under
the same backend, \ProposedM still achieves $22.2\%$ and $11.9\%$
lower RMSE, respectively, supporting the benefit of the proposed response-aligned
acquisition strategy beyond the reconstruction backend itself.

The qualitative examples in Fig.~\ref{fig:active_qualitative} indicate
that \ProposedM more faithfully recovers spatial variations and structural
boundaries, with fewer reconstruction artifacts. The improvement of
Ours + Active over Ours + Random further illustrates that the gain
arises not only from posterior refinement, but also from acquiring
more informative measurements. These examples should be interpreted
as visual support for the quantitative trends rather than as independent
evidence of general superiority.

Fig.~\ref{fig:active_conditions} evaluates measurement efficiency
across observation rates on RFR. GPARA achieves the lowest MAE at
every observation rate. The gap over the conventional active baselines
widens rapidly beyond the sparsest regime. AdaSense + Graph also improves
substantially as the budget increases, confirming the benefit of the
shared graph reconstruction backend.

Table~\ref{tab:dc_efficiency} evaluates acquisition-stage computational
cost on DC. Implementation configuration of GPARA is given in Appendix~\ref{subsec:Matrix-Free-Graph-Solves}.
It requires only 20 sample-equivalent NFEs, substantially reducing
dependence on repeated denoiser evaluations, but posterior computations
remain a non-negligible part of its runtime. While slower than ADS
in wall-clock time, it uses $54.6\%$ less peak memory, remains comparable
to the AdaSense variants, and is $1.71\times$ faster than AdaSense
+ Graph.

\begin{figure}[t]
\centering

\begin{minipage}[t]{0.49\textwidth}
    \vspace{0pt}
    \centering
    \includegraphics[width=\linewidth]{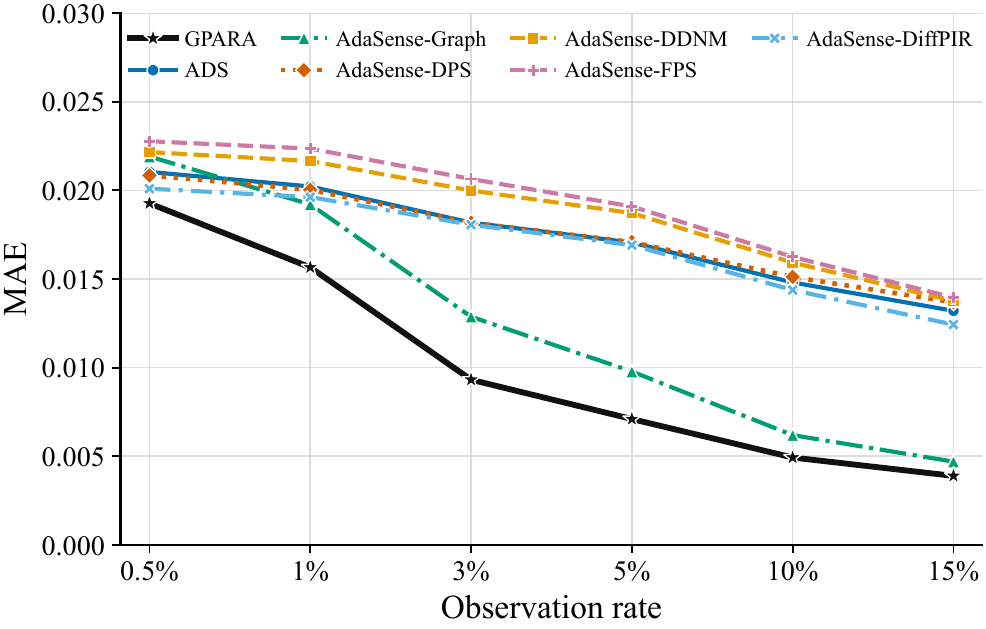}
    \vspace{-4mm}
    \captionof{figure}{
        Active acquisition performance over the common prediction region across varying observation rates on RFR.
    }
    \label{fig:active_conditions}
\end{minipage}
\hfill
\begin{minipage}[t]{0.49\textwidth}
\small
    \vspace{0pt}
    \centering
    \captionof{table}{
        Acquisition efficiency on DC. Time denotes acquisition-stage wall-clock time per image (mean \(\pm\) standard deviation); NFE is the sample-equivalent number of denoiser evaluations, and Mem. is peak GPU memory usage.
    }
    \label{tab:dc_efficiency}
	\resizebox{1\textwidth}{!}{%
    \renewcommand{\arraystretch}{1.05}
    \begin{tabular}{p{2cm}<{\raggedright}p{1.7cm}<{\centering}p{0.5cm}<{\centering}p{0.9cm}<{\centering}}
	\hline
        \multirow{2}{*}{Method}
        & \multirow{2}{*}{\makecell[c]{Time \\(s/img)}}
        & \multirow{2}{*}{NFE}
        & \multirow{2}{*}{\makecell[c]{Mem. \\(MiB)}} \\
		& & & \\
        \hline
        ADS
        & $\mathbf{3.14 \pm 0.24}$
        & 252
        & 1493.6
        \\
        AdaS.+DPS
        & 14.7 $\pm$ 0.21
        & 1240
        & 1497.4
        \\
        AdaS.+DDNM
        & 6.72 $\pm$ 0.15
        & 1240
        & 664.4
        \\
        AdaS.+FPS
        & 6.87 $\pm$ 0.15
        & 1240
        & 668.9
        \\
        AdaS.+DiffPIR
        & 6.88 $\pm$ 0.15
        & 1240
        & 663.6
        \\
        AdaS.+Graph
        & 11.5 $\pm$ 0.27
        & 1240
        & 671.7
        \\
        GPARA
        & \textbf{6.73 $\pm$ 0.18}
        & \textbf{20}
        & 677.9
        \\
        \hline
    \end{tabular}
}
\end{minipage}

\end{figure}

\subsection{Mechanism and Ablation Studies}\label{sec:mechanism_ablation}

\paragraph{Posterior refinement mechanism.}

Table~\ref{tab:mechanism_ablation1}(a) separates the effects of
graph coupling and correction placement under matched random observations.
With step-wise injection, a uniform graph reduces RMSE by $40.4\%$
on RFR and $22.8\%$ on DC relative to pointwise conditioning. Learning
the edge conductances provides a further $2.1\%$ and $1.9\%$ reduction
under the same injection schedule. Moving from final-only to step-wise
correction improves the learned-graph variant by $18.0\%$ and $14.7\%$,
respectively. These comparisons show that substantial improvements
can be obtained through graph coupling and repeated measurement correction,
while learned conductances yield smaller but consistent additional
improvement gains.

\paragraph{Acquisition-score components.}

Table~\ref{tab:mechanism_ablation1}(b) evaluates the acquisition-score
components with a shared reconstruction backend. Replacing local reducibility
with the unweighted utility reduces RMSE by $33.7\%$ on RFR and $1.9\%$
on DC. Risk weighting provides a further $3.7\%$ and $2.7\%$ reduction,
while adding residual calibration improves the weighted analytic policy
by $9.6\%$ and $7.8\%$, respectively. To assess the value of the
explicit analytic anchor, we additionally train a direct learn scorer
$U^{ln}_{i}$ that predicts the candidate score without an additive
analytic term, using the same loss functions as the calibrated scorer.
Relative to this baseline, the weighted analytic policy reduces RMSE
by $9.8\%$ on RFR and $4.5\%$ on DC, whereas the calibrated policy
achieves reductions of $18.4\%$ and $11.9\%$. These results support
retaining the analytic posterior-response anchor and learning residual
corrections, rather than learning the entire acquisition score directly.

\begin{table}
\caption{Mechanism ablations of \ProposedM. (a) Compares posterior mechanisms
under identical random observations. (b) Compares acquisition criteria
while keeping others fixed. }

\centering\label{tab:mechanism_ablation1} \small
\setlength{\tabcolsep}{3pt}
\renewcommand{\arraystretch}{1.15}

\begin{minipage}[t]{0.48\linewidth}
\centering
\textbf{(a) Posterior refinement mechanism}

\vspace{2pt}

\begin{tabular}{
p{2cm}<{\raggedright}
p{1.8cm}<{\centering}
p{1.1cm}<{\centering}
p{1.1cm}<{\centering}
}
\hline
Model & Injection & RFR & DC \\

\hline

Point Bayes  & Step-wise
& 0.0327 & 0.346
\\

Uniform graph  & Final-only
& 0.0240 & 0.309
\\

Learn Graph & Final-only
& 0.0233 & 0.307
\\

Uniform graph  & Step-wise
& 0.0195 & 0.267
\\

Learn Graph & Step-wise
& \textbf{0.0191} & \textbf{0.262} \\

\hline
\end{tabular}
\end{minipage}
\hfill
\begin{minipage}[t]{0.48\linewidth}
\centering
\textbf{(b) Acquisition utility}

\vspace{2pt}

\begin{tabular}{
p{3.5cm}<{\raggedright}
p{1.15cm}<{\centering}
p{1.35cm}<{\centering}
}
\hline
Acquisition score &RFR & DC
\\

\hline

Local reducibility $\ell_i$
& 0.0246
& 0.268
\\

Weighted analytic $U_i$
& 0.0157
& 0.256
\\
Unweighted $U_i$ ($\boldsymbol{W}_{k}=\boldsymbol{I}$)
& 0.0163
& 0.263
\\
Direct learned score $U_i^{ln}$
& 0.0174
& 0.268
\\
\textbf{Final calibrated $\widetilde U_i$}
& \textbf{0.0142}
& \textbf{0.236}
\\
\hline
\end{tabular}
\end{minipage}
\end{table}

\section{Conclusion}

We introduced GPARA for active grounding with frozen diffusion priors.
The method learns a graph surrogate over prediction residuals and
reuses its response structure for measurement refinement and candidate
evaluation. Under a matched surrogate, the theoretical analysis relates
weighted posterior-risk reduction to expected one-step refinement
benefit. The implementation combines cached prior predictions, sparse
conditioning, and bounded score calibration to avoid repeated diffusion
sample generation during acquisition. Experiments on radio-field reconstruction
and depth completion show improvements in both refinement and active
sensing. Ablations further show that step-wise structured refinement
outperforms terminal correction and that the weighted analytic acquisition
criterion is stronger than a learned-only scorer, supporting the role
of the proposed posterior-response mechanism beyond supervised policy
capacity. The current analysis is limited to the graph-Gaussian surrogate
and one-step acquisition, while nonlinear observations and longer-horizon
sensing remain important directions for future work.
\subsection*{Reproducibility Statement}

We provide the methodological and experimental details required to
reproduce GPARA. Section~\ref{sec:Method} specifies the graph-posterior
formulation, active-acquisition criterion, and the complete acquisition-reconstruction
pipeline summarized in Algorithm~\ref{alg:gpara}. Appendix~\ref{sec:Proofs-and-Numerical}
provides the numerical details underlying the method. The learned
graph predictor and residual scorer are generic function approximators
rather than architecture-specific components of the formulation; Appendix~\ref{sec:Network-Architectures--Offline}
specifies their functional interfaces and staged training objectives.
Section~\ref{sec:experiments} describes the datasets, sensing budgets,
evaluation metrics, baseline configurations, and experimental protocols.
All comparisons use the same frozen diffusion backbones. Source code and complete implementation configurations will be released publicly soon. 

\bibliography{references}
\bibliographystyle{iclr2027_conference}

\appendix

\section{Additional results}\label{sec:Additional-results}

\begin{table}[H]

\caption{Sensitivity to the number of reverse diffusion steps on RFR under
$3\%$ random observations. Results are full-region RMSE (mean $\pm$
standard deviation) over five paired seeds.}
\centering
\label{tab:rfr_diffusion_steps}
\resizebox{\textwidth}{!}{%
\renewcommand{\arraystretch}{1.08}
\begin{tabular}{lcccccc}
\hline
Method
& 5 steps
& 10 steps
& 20 steps
& 50 steps
& 100 steps
& 500 steps \\
\hline
DDPM
& 0.035 $\pm$ 0.0034
& 0.030 $\pm$ 0.0006
& 0.030 $\pm$ 0.0014
& 0.030 $\pm$ 0.0010
& 0.030 $\pm$ 0.0009
& 0.031 $\pm$ 0.0009 \\

DPS
& 0.031 $\pm$ 0.0023
& 0.028 $\pm$ 0.0006
& 0.028 $\pm$ 0.0013
& 0.027 $\pm$ 0.0009
& 0.024 $\pm$ 0.0003
& \textbf{0.018 $\pm$ 0.0002} \\

DDNM
& 0.033 $\pm$ 0.0031
& 0.028 $\pm$ 0.0005
& 0.028 $\pm$ 0.0013
& 0.027 $\pm$ 0.0009
& 0.027 $\pm$ 0.0007
& 0.027 $\pm$ 0.0009 \\

FPS
& 0.034 $\pm$ 0.0033
& 0.027 $\pm$ 0.0005
& 0.027 $\pm$ 0.0009
& 0.026 $\pm$ 0.0002
& 0.025 $\pm$ 0.0003
& 0.022 $\pm$ 0.0004 \\

DiffPIR
& 0.030 $\pm$ 0.0021
& 0.027 $\pm$ 0.0006
& 0.026 $\pm$ 0.0008
& 0.026 $\pm$ 0.0003
& 0.025 $\pm$ 0.0003
& 0.021 $\pm$ 0.0004 \\

GPARA
& \textbf{0.019 $\pm$ 0.0007}
& \textbf{0.017 $\pm$ 0.0002}
& \textbf{0.016 $\pm$ 0.0003}
& \textbf{0.016 $\pm$ 0.0002}
& \textbf{0.016 $\pm$ 0.0001}
& \textbf{0.016 $\pm$ 0.0002} \\
\hline
\end{tabular}%
}

\end{table}

\begin{table}[H]
\caption{Alignment between acquisition scores and empirical one-step reconstruction
benefit under shared random prefixes.}
\centering
\label{tab:utility_deployed_benefit}
\renewcommand\arraystretch{1.05}
\begin{tabular}{lcccc}
\hline
\multirow{2}{*}{Score} &\multicolumn{2}{c}{RFR} &\multicolumn{2}{c}{DC} \\
\cline{2-5}
& Spearman $\uparrow$ & Regret ($\times10^{-5}$) $\downarrow$ & Spearman $\uparrow$ & Regret ($\times10^{-4}$) $\downarrow$ \\
\hline
High-accuracy analytic & 0.42 $\pm$ 0.26 & 1.26 $\pm$ 1.45 & 0.19 $\pm$ 0.27 & 9.23 $\pm$ 15.60 \\
Deployed analytic      & 0.35 $\pm$ 0.27 & 1.14 $\pm$ 1.35 & 0.11 $\pm$ 0.25 & 9.51 $\pm$ 15.52 \\
Calibrated utility     & 0.40 $\pm$ 0.25 & 1.06 $\pm$ 1.25 & 0.15 $\pm$ 0.24 & \textbf{3.60 $\pm$ 4.97} \\
Direct learned scorer  & \textbf{0.44 $\pm$ 0.29} & \textbf{0.94 $\pm$ 1.28} & \textbf{0.23 $\pm$ 0.27} & 6.08 $\pm$ 14.75 \\
\hline
\end{tabular}

\end{table}

\paragraph{Sensitivity to reverse diffusion steps.}

Table~\ref{tab:rfr_diffusion_steps} evaluates sensitivity to the
reverse-step budget. \ProposedM reaches its best performance within
roughly $20$ steps, whereas several baselines continue to improve
markedly as more reverse steps are added; for example, DPS decreases
from $0.031$ RMSE at five steps to $0.018$ at $500$ steps. This
suggests that conventional inverse solvers rely more heavily on repeated
denoising and data-consistency updates to accumulate and propagate
measurement information. In contrast, the explicit graph-posterior
response in \ProposedM enables effective cross-location propagation
with substantially fewer reverse steps, making the method more compatible
with accelerated sampling. These results suggest that the proposed
explicit posterior propagation reduces sensitivity to the denoising-step
budget, although this ablation isolates reconstruction accuracy rather
than end-to-end wall-clock efficiency.

\paragraph{One-step diagnostic protocol.}

We evaluate whether each acquisition score predicts the empirical
benefit of the next measurement under shared random observation prefixes.
For each task, we use 20 test instances and construct three prefixes
per instance at approximately $10\%$, $50\%$, and $90\%$ of the
final sensing budget, yielding 60 matched prefix states. At each prefix
$\mathcal{S}$, all scorers evaluate the same randomly sampled candidate
set $\mathcal{C}^{\mathrm{rand}}$ with $|\mathcal{C}^{\mathrm{rand}}|=16$.
Each candidate is acquired independently and evaluated using four
paired runs of the complete Stage-II reconstruction. Its empirical
benefit is $g_{i}=E_{\mathrm{pre}}-E_{i}$, where $E_{\mathrm{pre}}$
and $E_{i}$ denote reconstruction MSE before and after measuring
candidate $i$, respectively, over a common evaluation region. For
candidate scores $s_{i}$ and the selected location $a=\arg\max_{i\in\mathcal{C}}s_{i}$,
we compute
\[
\rho_{s}=\mathrm{Corr}\left(\mathrm{rank}_{\mathcal{C}}(s_{i}),\mathrm{rank}_{\mathcal{C}}(g_{i})\right),\hspace*{0.3cm}\mathrm{Regret}=\max_{i\in\mathcal{C}}g_{i}-g_{a}=E_{a}-\min_{i\in\mathcal{C}}E_{i}.
\]
Here, $\mathrm{Corr}(\cdot)$ denotes Pearson correlation applied
to within-prefix ranks, with ties assigned average ranks. It measures
candidate-ranking agreement, while regret quantifies the selected
action’s excess MSE over the empirically best candidate. Both metrics
are computed per prefix and then aggregated.

The deployed analytic score uses the same numerical settings as the
main experiments. The high-accuracy analytic diagnostic uses PCG with
up to $96$ iterations and tolerance $10^{-7}$. Table~\ref{tab:utility_deployed_benefit}
shows that calibration improves the deployed analytic score on average,
increasing Spearman correlation from $0.35$ to $0.40$ on RFR and
from $0.11$ to $0.15$ on DC, while reducing regret. The direct learned
scorer achieves higher one-step rank on both tasks and the lowest
RFR regret, whereas the calibrated achieves the lowest DC regret.
These diagnostics assess only a single decision from a shared state;
they do not directly predict full-budget closed-loop performance.
Indeed, over complete acquisition, the calibrated policy achieves
lower final RMSE than the directly learned scorer.

These diagnostics evaluate a single next-measurement decision from
shared exogenous states, whereas Table~\ref{tab:mechanism_ablation1}
evaluates full-budget policy rollouts whose future states depend on
previous selections. The random-prefix distribution is also closer
to the randomly sampled observation states used during scorer training,
while closed-loop policies induce structured, policy-dependent states.
Moreover, each acquisition changes the posterior and therefore all
subsequent decisions. The analytic anchor explicitly recomputes marginal
value and redundancy from the updated posterior geometry at every
refresh, whereas the direct scorer must learn these effects from supervised
states. These differences provide plausible explanations for why the
calibrated policy achieves lower final RMSE than the direct scorer
despite not uniformly dominating it in the one-step diagnostics.

\section{Proofs and Numerical Details}\label{sec:Proofs-and-Numerical}

\subsection{Proof of Proposition~\ref{prop:Graph-Bayesian-residual}}\label{subsec:Proof-of-Proposition}

\paragraph{Positive definiteness of the posterior precision.}

For a symmetric graph with nonnegative conductances $c_{ij}=c_{ji}\geq0$
and every $\boldsymbol{v}\in\mathbb{R}^{M}$, the graph Laplacian
satisfies
\[
\boldsymbol{v}^{\mathsf{T}}\boldsymbol{L}_{g}\boldsymbol{v}=\frac{1}{2}\sum^{M}_{i=1}\sum^{M}_{j=1}c_{ij}(v_{i}-v_{j})^{2}\geq0.\tag{B.1}
\]
Hence for $\tau>0$ and $\lambda\geq0$, $\boldsymbol{L}_{g}\succeq0$
and $\boldsymbol{Q}_{g}\succeq\tau\boldsymbol{I}\succ0$. Since $\boldsymbol{R}_{\mathcal{S}}\succ0$,
for any nonzero $\boldsymbol{v}$, we have
\[
\begin{aligned}\boldsymbol{v}^{\mathsf{T}}\boldsymbol{A}_{\mathcal{S}}\boldsymbol{v} & =\tau\|\boldsymbol{v}\|^{2}_{2}+\lambda\boldsymbol{v}^{\mathsf{T}}\boldsymbol{L}_{g}\boldsymbol{v}+\left\Vert \boldsymbol{H}_{\mathcal{S}}\boldsymbol{v}\right\Vert ^{2}_{\boldsymbol{R}^{-1}_{\mathcal{S}}}\geq\tau\|\boldsymbol{v}\|^{2}_{2}>0.\end{aligned}
\tag{B.2}
\]
Thus $\boldsymbol{A}_{\mathcal{S}}\succeq\tau\boldsymbol{I}\succ0$.
This does not require the graph to be connected.

\paragraph{Conditional Gaussian posterior.}

The residual $\boldsymbol{x}_{0}=\widehat{\boldsymbol{x}}_{0|t}+\boldsymbol{\delta}$
and the observation model imply
\[
\boldsymbol{y}_{\mathcal{S}}\mid\boldsymbol{\delta},\widehat{\boldsymbol{x}}_{0|t},\boldsymbol{c}\sim\mathcal{N}\left(\boldsymbol{H}_{\mathcal{S}}\left(\widehat{\boldsymbol{x}}_{0|t}+\boldsymbol{\delta}\right),\boldsymbol{R}_{\mathcal{S}}\right).\tag{B.3}
\]
Combining this likelihood with $q_{\varPhi}(\boldsymbol{\delta}\mid\boldsymbol{c}_{g})=\mathcal{N}(\boldsymbol{0},\boldsymbol{Q}^{-1}_{g})$
gives
\[
\begin{aligned}q_{\varPhi}\left(\boldsymbol{\delta}\mid\widehat{\boldsymbol{x}}_{0|t},\boldsymbol{y}_{\mathcal{S}},\boldsymbol{c}_{g}\right) & \propto\exp\left(-\frac{1}{2}\boldsymbol{\delta}^{\mathsf{T}}\boldsymbol{Q}_{g}\boldsymbol{\delta}-\frac{1}{2}\left\Vert \boldsymbol{y}_{\mathcal{S}}-\boldsymbol{H}_{\mathcal{S}}\left(\widehat{\boldsymbol{x}}_{0|t}+\boldsymbol{\delta}\right)\right\Vert ^{2}_{\boldsymbol{R}^{-1}_{\mathcal{S}}}\right).\end{aligned}
\tag{B.4}
\]
Expanding the exponent and omitting terms independent of $\boldsymbol{\delta}$,
the negative log-density is
\begin{equation}
\frac{1}{2}\boldsymbol{\delta}^{\mathsf{T}}\boldsymbol{A}_{\mathcal{S}}\boldsymbol{\delta}-\boldsymbol{\delta}^{\mathsf{T}}\boldsymbol{H}^{\mathsf{T}}_{\mathcal{S}}\boldsymbol{R}^{-1}_{\mathcal{S}}\left(\boldsymbol{y}_{\mathcal{S}}-\boldsymbol{H}_{\mathcal{S}}\widehat{\boldsymbol{x}}_{0|t}\right)+\mathrm{const}.\tag{B.5}\label{eq:A5}
\end{equation}
Define $\boldsymbol{m}_{\mathcal{S},t}=\boldsymbol{A}^{-1}_{\mathcal{S}}\boldsymbol{H}^{\mathsf{T}}_{\mathcal{S}}\boldsymbol{R}^{-1}_{\mathcal{S}}\left(\boldsymbol{y}_{\mathcal{S}}-\boldsymbol{H}_{\mathcal{S}}\widehat{\boldsymbol{x}}_{0|t}\right)$.
Completing the square transforms (\ref{eq:A5}) into
\begin{equation}
\frac{1}{2}\left(\boldsymbol{\delta}-\boldsymbol{m}_{\mathcal{S},t}\right)^{\mathsf{T}}\boldsymbol{A}_{\mathcal{S}}\left(\boldsymbol{\delta}-\boldsymbol{m}_{\mathcal{S},t}\right)+\mathrm{const}.\tag{B.6}
\end{equation}
Because $\boldsymbol{A}_{\mathcal{S}}\succ0$, this expression defines
a proper Gaussian distribution. Consequently,
\[
q_{\varPhi}\left(\boldsymbol{\delta}\mid\widehat{\boldsymbol{x}}_{0|t},\boldsymbol{y}_{\mathcal{S}},\boldsymbol{c}_{g}\right)=\mathcal{N}\left(\boldsymbol{m}_{\mathcal{S},t},\boldsymbol{\Sigma}_{\mathcal{S}}\right),\hspace*{0.3cm}\boldsymbol{\Sigma}_{\mathcal{S}}=\boldsymbol{A}^{-1}_{\mathcal{S}}.\tag{B.7}
\]
This proves the posterior formulas and uniqueness in Proposition~\ref{prop:Graph-Bayesian-residual}.

\subsection{Proof of Proposition~\ref{prop:Proposition 2}}\label{subsec:Proof-of-Proposition-1}

Consider the current observation set $\mathcal{S}_{k}$ and a candidate
$i$. The prospective measurement is
\[
y_{i}=\boldsymbol{e}^{\mathsf{T}}_{i}\boldsymbol{x}_{0}+\eta_{i},\hspace*{0.3cm}\eta_{i}\sim\mathcal{N}(0,\sigma^{2}_{y}),\tag{B.8}
\]
where $\sigma^{2}_{y}>0$, $\boldsymbol{e}_{i}$ is the $i$th canonical
basis vector, and $\ensuremath{\eta_{i}}$ is independent of the residual
and noise.

\paragraph{Posterior-mean response to a new measurement.}

The prospective innovation can be written as
\[
\begin{aligned}\nu_{i} & =y_{i}-\boldsymbol{e}^{\mathsf{T}}_{i}\left(\widehat{\boldsymbol{x}}_{0|t}+\boldsymbol{m}_{\mathcal{S}_{k},t}\right)=\boldsymbol{e}^{\mathsf{T}}_{i}\left(\boldsymbol{\delta}_{t}-\boldsymbol{m}_{\mathcal{S}_{k},t}\right)+\eta_{i}.\end{aligned}
\tag{B.9}
\]
Its conditional moments are
\begin{equation}
\mathbb{E}_{q_{\varPhi}}\left[\nu_{i}\mid\mathcal{F}_{k}\right]=0,\hspace*{0.3cm}\mathbb{E}_{q_{\Phi}}\left[\nu^{2}_{i}\mid\mathcal{F}_{k}\right]=\sigma^{2}_{y}+\Sigma_{\mathcal{S}_{k},ii},\tag{B.10}\label{eq:A10}
\end{equation}
and
\begin{equation}
\mathbb{E}_{q_{\varPhi}}\left[\left(\boldsymbol{\delta}_{t}-\boldsymbol{m}_{\mathcal{S}_{k},t}\right)\nu_{i}\mid\mathcal{F}_{k}\right]=\boldsymbol{\Sigma}_{\mathcal{S}_{k}}\boldsymbol{e}_{i}.\tag{B.11}\label{eq:A11}
\end{equation}
Since the residual and the prospective measurement are jointly Gaussian,
conditioning on $y_{i}$ gives
\begin{equation}
\boldsymbol{m}_{\mathcal{S}_{k}\cup\{i\},t}=\boldsymbol{m}_{\mathcal{S}_{k},t}+\frac{\boldsymbol{\Sigma}_{\mathcal{S}_{k}}\boldsymbol{e}_{i}}{\sigma^{2}_{y}+\Sigma_{\mathcal{S}_{k},ii}}\nu_{i}.\tag{B.12}\label{eq:A12}
\end{equation}
Hence the target posterior-mean change equals the residual posterior-mean
change:
\begin{equation}
\Delta\widehat{\boldsymbol{x}}_{i}=\boldsymbol{m}_{\mathcal{S}_{k}\cup\{i\},t}-\boldsymbol{m}_{\mathcal{S}_{k},t}=\boldsymbol{u}_{i}\nu_{i},\hspace*{0.3cm}\boldsymbol{u}_{i}=\frac{\boldsymbol{\Sigma}_{\mathcal{S}_{k}}\boldsymbol{e}_{i}}{\sigma^{2}_{y}+\Sigma_{\mathcal{S}_{k},ii}}.\tag{B.13}\label{eq:A13}
\end{equation}

\paragraph{Covariance-risk reduction.}

The additional independent point measurement updates the precision
\[
\boldsymbol{A}_{\mathcal{S}_{k}\cup\{i\}}=\boldsymbol{A}_{\mathcal{S}_{k}}+\sigma^{-2}_{y}\boldsymbol{e}_{i}\boldsymbol{e}^{\mathsf{T}}_{i}.\tag{B.14}
\]
Applying the rank-one inverse identity yields
\[
\boldsymbol{\Sigma}_{\mathcal{S}_{k}\cup\{i\}}=\boldsymbol{\Sigma}_{\mathcal{S}_{k}}-\frac{\boldsymbol{\Sigma}_{\mathcal{S}_{k}}\boldsymbol{e}_{i}\boldsymbol{e}^{\mathsf{T}}_{i}\boldsymbol{\Sigma}_{\mathcal{S}_{k}}}{\sigma^{2}_{y}+\Sigma_{\mathcal{S}_{k},ii}}.\tag{B.15}
\]
Using the same current weighting matrix $\boldsymbol{W}_{k}$ on both
sides of the hypothetical update,
\[
\begin{aligned}U_{i}(\mathcal{S}_{k}) & =\mathrm{tr}\left(\boldsymbol{W}_{k}\left(\boldsymbol{\Sigma}_{\mathcal{S}_{k}}-\boldsymbol{\Sigma}_{\mathcal{S}_{k}\cup\{i\}}\right)\right)=\frac{\boldsymbol{e}^{\mathsf{T}}_{i}\boldsymbol{\Sigma}_{\mathcal{S}_{k}}\boldsymbol{W}_{k}\boldsymbol{\Sigma}_{\mathcal{S}_{k}}\boldsymbol{e}_{i}}{\sigma^{2}_{y}+\Sigma_{\mathcal{S}_{k},ii}}=\left(\sigma^{2}_{y}+\Sigma_{\mathcal{S}_{k},ii}\right)\|\boldsymbol{u}_{i}\|^{2}_{\boldsymbol{W}_{k}}.\end{aligned}
\tag{B.16}
\]
The second equality follows from cyclic invariance of the trace. Since
$\boldsymbol{W}_{k}\succeq0$, the utility is nonnegative. Finally,
$\boldsymbol{u}_{i}$ is fixed given the current state. Combining
(\ref{eq:A10}) and (\ref{eq:A13}),
\[
\begin{aligned}\mathbb{E}_{q_{\varPhi}}\left[\|\Delta\widehat{\boldsymbol{x}}_{i}\|^{2}_{\boldsymbol{W}_{k}}\mid\mathcal{F}_{k}\right]=\mathbb{E}_{q_{\mathit{\Phi}}}\left[\nu^{2}_{i}\mid\mathcal{F}_{k}\right]\|\boldsymbol{u}_{i}\|^{2}_{\boldsymbol{W}_{k}}=\left(\sigma^{2}_{y}+\Sigma_{\mathcal{S}_{k},ii}\right)\|\boldsymbol{u}_{i}\|^{2}_{\boldsymbol{W}_{k}}.\end{aligned}
\tag{B.17}
\]
The identity establishes that the covariance reduction and the expected
squared posterior-mean change describe the same one-step effect under
the surrogate. This proves Proposition~\ref{prop:Proposition 2}.

\subsection{Proof of Corollary~\ref{cor:Ranking Consistency}}\label{subsec:Proof-of-Corollary}

For the unclipped comparison in Corollary~\ref{cor:Ranking Consistency},
the two uniformly scaled estimates are
\[
\widetilde{\boldsymbol{x}}_{0|t,\mathcal{S}_{k}}=\widehat{\boldsymbol{x}}_{0|t}+\rho_{g}\boldsymbol{m}_{\mathcal{S}_{k},t},\hspace*{0.3cm}\widetilde{\boldsymbol{x}}_{0|t,\mathcal{S}_{k}\cup\{i\}}=\widehat{\boldsymbol{x}}_{0|t}+\rho_{g}\boldsymbol{m}_{\mathcal{S}_{k}\cup\{i\},t}.\tag{B.18}
\]

By (\ref{eq:A12}), $\widetilde{\boldsymbol{x}}_{0|t,\mathcal{S}_{k}\cup\{i\}}=\widetilde{\boldsymbol{x}}_{0|t,\mathcal{S}_{k}}+\rho_{g}\boldsymbol{u}_{i}\nu_{i}$.
Since $\boldsymbol{x}_{0}-\widetilde{\boldsymbol{x}}_{0|t,\mathcal{S}_{k}}=\boldsymbol{\delta}_{t}-\rho_{g}\boldsymbol{m}_{\mathcal{S}_{k},t}$,
expanding the weighted squared-error difference gives
\begin{equation}
\begin{aligned}\Delta_{i}(\rho_{g}) & =\left\Vert \boldsymbol{\delta}_{t}-\rho_{g}\boldsymbol{m}_{\mathcal{S}_{k},t}\right\Vert ^{2}_{\boldsymbol{W}_{k}}-\left\Vert \boldsymbol{\delta}_{t}-\rho_{g}\boldsymbol{m}_{\mathcal{S}_{k},t}-\rho_{g}\boldsymbol{u}_{i}\nu_{i}\right\Vert ^{2}_{\boldsymbol{W}_{k}}\\
 & =2\rho_{g}\nu_{i}\boldsymbol{u}^{\mathsf{T}}_{i}\boldsymbol{W}_{k}\left(\boldsymbol{\delta}_{t}-\rho_{g}\boldsymbol{m}_{\mathcal{S}_{k},t}\right)-\rho^{2}_{g}\nu^{2}_{i}\|\boldsymbol{u}_{i}\|^{2}_{\boldsymbol{W}_{k}}.
\end{aligned}
\tag{B.19}\label{eq:A19}
\end{equation}
The scaled estimate is generally not the surrogate posterior mean
when $\rho_{g}\neq1$. To account for this distinction, decompose
its residual error as
\[
\boldsymbol{\delta}_{t}-\rho_{g}\boldsymbol{m}_{\mathcal{S}_{k},t}=\left(\boldsymbol{\delta}_{t}-\boldsymbol{m}_{\mathcal{S}_{k},t}\right)+(1-\rho_{g})\boldsymbol{m}_{\mathcal{S}_{k},t}.\tag{B.20}
\]
Using the zero innovation mean in (\ref{eq:A10}) and the cross-moment
in (\ref{eq:A11}),
\begin{equation}
\begin{aligned} & \mathbb{E}_{q_{\mathit{\Phi}}}\left[\left(\boldsymbol{\delta}_{t}-\rho_{g}\boldsymbol{m}_{\mathcal{S}_{k},t}\right)\nu_{i}\mid\mathcal{F}_{k}\right]\\
 & \quad=\mathbb{E}_{q_{\mathit{\Phi}}}\left[\left(\boldsymbol{\delta}_{t}-\boldsymbol{m}_{\mathcal{S}_{k},t}\right)\nu_{i}\mid\mathcal{F}_{k}\right]+(1-\rho_{g})\boldsymbol{m}_{\mathcal{S}_{k},t}\mathbb{E}_{q_{\mathit{\Phi}}}\left[\nu_{i}\mid\mathcal{F}_{k}\right]\\
 & \quad=\boldsymbol{\Sigma}_{\mathcal{S}_{k}}\boldsymbol{e}_{i}.
\end{aligned}
\tag{B.21}\label{eq:A21}
\end{equation}
Taking the conditional expectation of (\ref{eq:A19}) therefore yields
\[
\begin{aligned} & \mathbb{E}_{q_{\mathit{\Phi}}}\left[\Delta_{i}(\rho_{g})\mid\mathcal{F}_{k}\right]=2\rho_{g}\boldsymbol{u}^{\mathsf{T}}_{i}\boldsymbol{W}_{k}\boldsymbol{\Sigma}_{\mathcal{S}_{k}}\boldsymbol{e}_{i}-\rho^{2}_{g}\left(\sigma^{2}_{y}+\Sigma_{\mathcal{S}_{k},ii}\right)\|\boldsymbol{u}_{i}\|^{2}_{\boldsymbol{W}_{k}}\\
 & \quad=(2\rho_{g}-\rho^{2}_{g})\left(\sigma^{2}_{y}+\Sigma_{\mathcal{S}_{k},ii}\right)\|\boldsymbol{u}_{i}\|^{2}_{\boldsymbol{W}_{k}}=(2\rho_{g}-\rho^{2}_{g})U_{i}(\mathcal{S}_{k}),
\end{aligned}
\tag{B.22}
\]
where the second equality uses $\boldsymbol{\Sigma}_{\mathcal{S}_{k}}\boldsymbol{e}_{i}=\left(\sigma^{2}_{y}+\Sigma_{\mathcal{S}_{k},ii}\right)\boldsymbol{u}_{i}.$
For a common $0<\rho_{g}<2$, the factor $2\rho_{g}-\rho^{2}_{g}=\rho_{g}(2-\rho_{g})$
is positive and independent of the candidate. Hence
\[
\begin{aligned}\arg\max_{i\in\mathcal{C}_{k}}U_{i}(\mathcal{S}_{k})=\arg\max_{i\in\mathcal{C}_{k}}\mathbb{E}_{q_{\mathit{\Phi}}}\left[\Delta_{i}(\rho_{g})\mid\mathcal{F}_{k}\right],\end{aligned}
\tag{B.23}
\]
with equality understood as equality of maximizing sets when ties
occur. This proves Corollary~\ref{cor:Ranking Consistency}.

\subsection{A Bounded-Score Consequence}
\begin{lem}
\label{lem:(Bounded)}(Bounded Departure from the Analytic Anchor).
Let $\bar{U}_{i}$ denote the normalized analytic utility, and let
its bounded calibrated score be $\widetilde{U}_{i}=\bar{U}_{i}+c_{i}$
with $|c_{i}|\le\beta_{\mathrm{acq}}$. Define
\[
i^{\star}\in\arg\max_{i}\bar{U}_{i},\hspace*{0.3cm}\hat{i}\in\arg\max_{i}\widetilde{U}_{i}.
\]
Then $0\le\bar{U}_{i^{\star}}-\bar{U}_{\hat{i}}\le2\beta_{\mathrm{acq}}.$
Consequently, if the analytic margin satisfies
\[
\bar{U}_{i^{\star}}-\max_{j\neq i^{\star}}\bar{U}_{j}>2\beta_{\mathrm{acq}},\tag{B.24}
\]
then the analytic maximizer $i^{\star}$ remains the unique maximizer
after calibration.
\end{lem}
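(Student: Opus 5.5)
The argument is an elementary perturbation bound that uses only the optimality of the two maximizers and the uniform bound $|c_i|\le\beta_{\mathrm{acq}}$; none of the posterior structure from Propositions~\ref{prop:Graph-Bayesian-residual} and \ref{prop:Proposition 2} is needed. We take the candidate set to be finite, which it is since $\mathcal{C}_k\subseteq\mathcal{V}$, so both $i^{\star}$ and $\hat{i}$ exist.

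\emph{Lower bound.} The inequality $0\le\bar{U}_{i^{\star}}-\bar{U}_{\hat{i}}$ holds because $i^{\star}$ maximizes $\bar{U}$ over the same set that contains $\hat{i}$.

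\emph{Upper bound.} Optimality of $\hat{i}$ for the calibrated score gives $\widetilde{U}_{\hat{i}}\ge\widetilde{U}_{i^{\star}}$, that is,
\[
\bar{U}_{\hat{i}}+c_{\hat{i}}\ge\bar{U}_{i^{\star}}+c_{i^{\star}}.
\]
Rearranging,
\[
\bar{U}_{i^{\star}}-\bar{U}_{\hat{i}}\le c_{\hat{i}}-c_{i^{\star}}\le|c_{\hat{i}}|+|c_{i^{\star}}|\le 2\beta_{\mathrm{acq}}.
\]

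\emph{Uniqueness under the margin condition.} Assume (B.24), and take any $j\neq i^{\star}$. Then
\[
\widetilde{U}_{i^{\star}}-\widetilde{U}_{j}=(\bar{U}_{i^{\star}}-\bar{U}_{j})+(c_{i^{\star}}-c_j)\ge(\bar{U}_{i^{\star}}-\max_{j'\neq i^{\star}}\bar{U}_{j'})-2\beta_{\mathrm{acq}}>0.
\]
So $i^{\star}$ strictly beats every other candidate under $\widetilde{U}$, and is therefore its unique maximizer; in particular $\hat{i}=i^{\star}$.

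The only step needing care is this last one. It needs strict inequality against \emph{every} competitor, not just against $\hat{i}$, and this is what forces the use of $\max_{j\neq i^{\star}}$ in (B.24). The margin condition also makes $i^{\star}$ the unique analytic maximizer, so the notation $i^{\star}$ is unambiguous there.
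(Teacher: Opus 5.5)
Your proof is correct and follows the paper's argument. The upper bound comes from the same rearrangement of $\widetilde{U}_{\hat{i}}\ge\widetilde{U}_{i^{\star}}$, and your explicit comparison of $i^{\star}$ against every $j\neq i^{\star}$ spells out the paper's one-line remark that no other candidate can overcome a gap larger than $2\beta_{\mathrm{acq}}$. (Equivalently, applying the first bound to an arbitrary $\hat{i}\in\arg\max_i\widetilde{U}_i$ already forces $\hat{i}=i^{\star}$ under (B.24), so the extra care you flag in the last step is not strictly needed.)
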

\begin{proof}
By the optimality of $\widehat{i}$,
\[
\bar{U}_{\widehat{i}}+c_{\widehat{i}}\ge\bar{U}_{i^{\star}}+c_{i^{\star}}.
\]
Therefore,
\[
\bar{U}_{i^{\star}}-\bar{U}_{\widehat{i}}\le c_{\widehat{i}}-c_{i^{\star}}\le|c_{\widehat{i}}|+|c_{i^{\star}}|\le2\beta_{\mathrm{acq}}.
\]
If the analytic margin exceeds $2\beta_{\mathrm{acq}}$, no other
candidate can overcome this gap under an admissible residual correction,
proving the second statement.
\end{proof}

Lemma~\ref{lem:(Bounded)} formalizes the role of bounded residual
calibration. The learned scorer may refine the ordering of candidates
whose analytic utilities are close, but it cannot arbitrarily overturn
a sufficiently strong analytic preference. The final acquisition rule
therefore remains anchored to the theoretically motivated posterior-risk
criterion while retaining limited flexibility to compensate for surrogate
mismatch.

\subsection{Matrix-Free Graph Solves and Covariance Probing}\label{subsec:Matrix-Free-Graph-Solves}

GPARA does not explicitly form the dense posterior covariance $\boldsymbol{\Sigma}_{\mathcal{S}}=\boldsymbol{A}^{-1}_{\mathcal{S}}.$
Instead, posterior refinement and covariance-dependent acquisition
statistics are evaluated through sparse linear solves with the posterior
precision $\boldsymbol{A}_{\mathcal{S}}$. For independent noise observations,
write $\mathbf{A}_{\mathcal{S}}=\mathbf{D}_{A}-\lambda\mathbf{C}_{g},$
where
\[
[\mathbf{D}_{A}]_{ii}=\tau+\lambda d_{i}+\sigma^{-2}_{y}\mathbf{1}_{\{i\in\mathcal{S}\}},\hspace*{0.3cm}d_{i}=\sum_{j}c_{ij}.\tag{B.25}
\]
For a reference prediction $\boldsymbol{\mu}$, define $\mathbf{f}_{\mathcal{S}}=\boldsymbol{H}^{\mathrm{T}}_{\mathcal{S}}\boldsymbol{R}^{-1}_{\mathcal{S}}\left(\boldsymbol{y}_{\mathcal{S}}-\boldsymbol{H}_{\mathcal{S}}\boldsymbol{\mu}\right).$
The posterior-mean correction $\boldsymbol{m}_{\mathcal{S}}=\boldsymbol{A}^{-1}_{\mathcal{S}}\mathbf{f}_{\mathcal{S}}$
is approximated by fixed-depth Jacobi iterations:
\[
\boldsymbol{m}^{(j+1)}=\mathbf{D}^{-1}_{A}\left(\mathbf{f}_{\mathcal{S}}+\lambda\mathbf{C}_{g}\boldsymbol{m}^{(j)}\right).\tag{B.26}
\]
Because $\ensuremath{\tau>0}$ and $\ensuremath{c_{ij}\ge0}$ the
Jacobi iteration is contractive under the point-sensing model, with
\[
q=\max_{i}\frac{\lambda d_{i}}{\tau+\lambda d_{i}+\sigma^{-2}_{y}\mathbf{1}_{\{i\in\mathcal{S}\}}}<1.\tag{B.27}
\]
Thus, Jacobi converges to the exact mean as its iteration depth increases.
However, we use fixed iterations and treat the resulting correction
as a finite-depth approximation rather than an exact solve.

For active acquisition, the utility requires only $\Sigma_{k,ii}$
and $\left(\boldsymbol{\Sigma}_{k}\boldsymbol{W}_{k}\boldsymbol{\Sigma}_{k}\right)_{ii}$.
We estimate these quantities using $P$ independent Rademacher probes
as
\[
\mathbf{z}^{(p)}\in\{-1,+1\}^{M},\hspace*{0.3cm}\mathbb{E}[\mathbf{z}^{(p)}\mathbf{z}^{(p)\mathsf{T}}]=\boldsymbol{I}.\tag{B.28}
\]
For each probe, we solve the linear systems by the preconditioned
conjugate gradient (PCG) method
\begin{equation}
\boldsymbol{A}_{k}\mathbf{v}^{(p)}=\mathbf{z}^{(p)},\hspace*{0.3cm}\boldsymbol{A}_{k}\mathbf{s}^{(p)}=\boldsymbol{W}^{1/2}_{k}\mathbf{z}^{(p)}\tag{B.29}\label{eq:B29}
\end{equation}
using preconditioned conjugate gradients with the diagonal Jacobi
preconditioner $\boldsymbol{P}_{k}=\mathrm{diag}(\boldsymbol{A}_{k}).$

With exact solves,
\[
\mathbb{E}\left[z^{(p)}_{i}v^{(p)}_{i}\right]=\Sigma_{k,ii},\hspace*{0.3cm}\mathbb{E}\left[\left(s^{(p)}_{i}\right)^{2}\right]=\left(\boldsymbol{\Sigma}_{k}\boldsymbol{W}_{k}\boldsymbol{\Sigma}_{k}\right)_{ii}.\tag{B.30}
\]
Accordingly, we evaluate the numerical acquisition utility as
\begin{equation}
\widehat{U}_{i}=\frac{\widehat{n}_{i}}{\sigma^{2}_{y}+\max(\widehat{d}_{i},[\mathbf{D}_{A}]^{-1}_{ii},\varepsilon_{d})},\hspace*{0.3cm}\widehat{d}_{i}=\frac{1}{P}\sum^{P}_{p=1}z^{(p)}_{i}v^{(p)}_{i},\hspace*{0.3cm}\widehat{n}_{i}=\frac{1}{P}\sum^{P}_{p=1}\left(s^{(p)}_{i}\right)^{2}.\tag{B.31}\label{eq:B31}
\end{equation}
The two raw moment estimators are unbiased under exact solves, but
their stabilized ratio is generally biased at finite $P$ because
of probe variance, correlation between numerator and denominator,
the nonlinear ratio, and truncated iterative solves. We therefore
treat $\widehat{U}_{i}$ as a numerical approximation of the analytic
utility rather than an unbiased estimator.

For both tasks, we use prior samples $L=4$, graph precision $\tau=0.0625$,
$\varepsilon_{d}=10^{-12}$, $24$ Jacobi iterations, and PCG with
at most 24 iterations and a relative tolerance of $10^{-5}$. RFR
uses $P=16$, $\lambda=0.8$, and $\sigma_{y}=10^{-4}$, whereas DC
uses $P=4$, $\lambda=1.0$, and $\sigma_{y}=2\times10^{-3}$.

\subsection{Construction of the Observation-Set-Adaptive Risk Matrix}\label{subsec:risk-weighting matrix}

The graph posterior and the risk matrix capture two complementary
aspects of active acquisition. The graph posterior $\boldsymbol{\Sigma}_{k}$
determines the spatial response induced by a prospective measurement:
its entry $\Sigma_{k,ji}$ controls how strongly an innovation observed
at candidate $i$ influences variable $j$. In contrast, $\boldsymbol{W}_{k}$
emphasizes locations that remain less resolved under the current observation
set. Locations exhibiting larger disagreement among the graph-conditioned
samples receive larger weights. Consequently, a candidate is valuable
not merely when it influences many variables, but when its response
reaches regions with relatively high remaining disagreement.

Let $\{\widehat{\boldsymbol{x}}^{(l)}_{0}\}^{L}_{l=1}$, $\widehat{\boldsymbol{x}}^{(l)}_{0}\sim p_{\theta}(\boldsymbol{x}_{0}\mid\boldsymbol{c})$
denote the $L$ cached samples produced by the frozen diffusion prior.
Such an ensemble is generated once before active acquisition and cached
thereafter.

At refresh $k$, the observations $(\mathcal{S}_{k},\boldsymbol{y}_{\mathcal{S}_{k}})$
are applied to $\widehat{\boldsymbol{x}}^{(l)}_{0}$ by Eq.~(\ref{eq:correction}).
Let $\widetilde{\boldsymbol{x}}^{(l)}_{0|\mathcal{S}_{k}}$ be the
resulting graph-conditioned sample after correction. The ensemble
mean and empirical variance are
\[
\widehat{\boldsymbol{\mu}}_{\mathcal{S}_{k}}=\frac{1}{L}\sum^{L}_{l=1}\widetilde{\boldsymbol{x}}^{(l)}_{0|\mathcal{S}_{k}},\hspace*{0.3cm}\widehat{\boldsymbol{v}}_{\mathcal{S}_{k}}=\frac{1}{L-1}\sum^{L}_{l=1}\left(\widetilde{\boldsymbol{x}}^{(l)}_{0|\mathcal{S}_{k}}-\widehat{\boldsymbol{\mu}}_{\mathcal{S}_{k}}\right)^{2}.\tag{B.32}
\]
The average empirical variance over unobserved admissible locations
is given by
\[
\overline{v}_{\mathcal{S}_{k}}=\frac{\sum_{j\in\mathcal{C}_{k}}{\displaystyle \widehat{v}_{\mathcal{S}_{k},j}}}{\max\!\left(|\mathcal{C}_{k}|,1\right)},\tag{B.33}
\]
where ${\displaystyle \widehat{v}_{\mathcal{S}_{k},j}}$ is the value
at location $j$. We then construct $\boldsymbol{W}_{k}=\mathrm{diag}(w_{k,1},...,w_{k,M})\succeq0$
with
\[
\begin{aligned}\ensuremath{w_{k,i}=\mathbf{1}_{\{i\in\mathcal{C}_{k}\}}\times\left(\varsigma+\frac{\Delta w}{r}\Pi_{[0,r]}\left(\frac{\widehat{v}_{\mathcal{S}_{k},i}}{\max(\overline{v}_{\mathcal{S}_{k}},\varepsilon_{v})}\right)\right)}\end{aligned}
.\tag{B.34}
\]
Here, the hyperparameters $\varsigma>0$, $\Delta w\geq0$, $r>0$,
and $\varepsilon_{v}>0$ specify the baseline weight, maximum additional
weight, relative-variance cap, and numerical floor, respectively.
In this work, we set $\varsigma=0.25$, $\Delta w=0.75$, $r=4$,
$\varepsilon_{v}=10^{-8}$ and $L=4.$

Under the unclipped linear graph update, the measurement-dependent
correction contains a common additive shift across all cached samples,
which cancels when computing the centered ensemble variance. Therefore,
$\boldsymbol{W}_{k}$ is primarily adaptive to the observation set
and graph-posterior response rather than directly to the realized
measurement values. The small diffusion ensemble is generated only
once and cached. The cached ensemble is reconditioned after each acquisition
refresh, allowing the risk emphasis to adapt to the evolving observation
set without regenerating diffusion samples.

\subsection{Batched Acquisition Between Posterior Refreshes}\label{subsec:Batched-Acquisition}

To reduce posterior-refresh cost, the implementation can use batched
acquisition as a computational approximation to the single-point rule
in Section~\ref{sec:Method}. At refresh $k$, we select at most
$B=8$ measurements, i.e., $B_{k}=\min\{B,\Gamma-|\mathcal{S}_{k}|\}$
from the current candidate set $\mathcal{C}_{k}$. The graph posterior,
risk matrix, and calibrated scores are computed once and remain fixed
while constructing the batch.

To discourage redundant selections, we reuse the weighted probe solutions
$\mathbf{s}^{(p)}$ from (\ref{eq:B29}). For candidates $i$ and
$j$, we define the response-overlap proxy
\[
\widehat{\kappa}_{ij}=\Pi_{[0,1]}\left(\frac{\left(\sum^{P}_{p=1}s^{(p)}_{i}s^{(p)}_{j}\right)^{2}}{\max\left(\max(P\widehat{n}_{i},10^{-12})\max(P\widehat{n}_{j},10^{-12}),10^{-12}\right)}\right),\tag{B.35}
\]
where $\widehat{n}_{i}$ is defined in (\ref{eq:B31}). Hence, no
additional linear solves are required beyond those already used for
acquisition-utility estimation. Let
\[
D_{k}=\max\left(\max_{i\in\mathcal{C}_{k}}\widetilde{U}_{i}(\mathcal{S}_{k})-\min_{i\in\mathcal{C}_{k}}\widetilde{U}_{i}(\mathcal{S}_{k}),1\right).\tag{B.36}
\]
Starting with $\mathcal{B}^{(0)}_{k}=\varnothing$, the batch is constructed
greedily as
\[
\begin{aligned}i_{k,\ell} & \in\arg\max_{i\in\mathcal{C}_{k}\setminus\mathcal{B}^{(\ell-1)}_{k}}\left(\widetilde{U}_{i}(\mathcal{S}_{k})-0.5D_{k}\sum_{j\in\mathcal{B}^{(\ell-1)}_{k}}\widehat{\kappa}_{ij}\right),\\
\mathcal{B}^{(\ell)}_{k} & =\mathcal{B}^{(\ell-1)}_{k}\cup\{i_{k,\ell}\},\hspace*{0.3cm}\ell=1,...,B_{k}.
\end{aligned}
\tag{B.37}
\]
Thus, the first point maximizes the calibrated utility, while subsequent
points are penalized according to their cumulative response overlap
with already selected batch members. After the batch $\mathcal{B}_{k}=\mathcal{B}^{(B_{k})}_{k}$
is determined, measurements are acquired and the observation set is
updated as
\[
\mathcal{S}_{k+1}=\mathcal{S}_{k}\cup\mathcal{B}_{k}.
\]
The graph posterior, risk matrix, and acquisition scores are then
recomputed for the next refresh. No newly acquired value is used during
construction of the current batch. Note that the single-point utility
remains the underlying acquisition criterion of GPARA, while batching
serves only as a computational approximation rather than an exact
joint batch-risk optimization.

\section{Network Architectures and Training}\label{sec:Network-Architectures--Offline}

The graph predictor $G_{\phi}$ and residual scorer $r_{\psi}$ are
learned function approximators whose architectures are not intrinsic
to the GPARA formulation. GPARA requires only that $G_{\phi}$ produce
valid nonnegative graph conductances from the available context and
that $r_{\psi}$ output candidate-wise bounded score corrections.
We therefore specify these interfaces and mappings below, while using
standard lightweight architectures for their implementation. The two
components are trained sequentially. With the diffusion backbone frozen,
$G_{\phi}$ is first optimized through differentiable graph conditioning.
$G_{\phi}$ is then frozen and $r_{\psi}$ is trained using privileged
supervision constructed from training targets. All networks are trained
offline and remain fixed during deployment.

\subsection{Network interfaces.}

The graph predictor uses a lightweight UNet to generate four directional
edge-logit maps from $\boldsymbol{c}_{g}$:
\[
G_{\phi}(\boldsymbol{c}_{g})\longrightarrow\mathbf{Z}_{\phi}=\{z_{i,\xi}\}_{i\in\mathcal{V},\xi\in\mathcal{D}}\in\mathbb{R}^{M\times4},\hspace*{0.3cm}\mathcal{D}=\{(0,1),(1,0),(1,1),(1,-1)\},\tag{C.1}
\]
corresponding to right, down, down-right, and down-left canonical
edges. Each undirected local edge is predicted once, and its reverse
direction is introduced deterministically when assembling the symmetric
graph. For $j=i+\xi$, let $m_{i,\xi}\in\{0,1\}$ be the valid-edge
mask. The final conductance is
\[
c_{ij}=c_{ji}=m_{i,\xi}\frac{2\mathrm{sigmoid}(z_{i,\xi})}{\|\xi\|_{2}},\hspace*{0.3cm}\xi\in\mathcal{D}.\tag{C.2}
\]
Thus, horizontal and vertical conductances lie in $[0,2]$, whereas
diagonal conductances are additionally scaled by $1/\sqrt{2}$. Invalid
or out-of-domain edges and self-connections are assigned zero weight.
The resulting adjacency matrix $\boldsymbol{C}_{g}=[c_{ij}]$ is sparse,
symmetric, and nonnegative. Four canonical edge maps therefore represent
an undirected graph with up to eight neighbors per interior node. 

The acquisition scorer independently outputs one residual score per
target location,
\[
R_{\psi}(\mathbf{z}_{\mathcal{S}_{k}})=\left[r_{\psi}(1;\mathbf{z}_{\mathcal{S}_{k}}),...,r_{\psi}(M;\mathbf{z}_{\mathcal{S}_{k}})\right]^{\mathrm{T}}\in\mathbb{R}^{M}.\tag{C.3}
\]
The scorer combines a convolutional encoder-decoder, four graph-message-passing
blocks, and a budget-conditioning MLP. The input $\mathbf{z}_{\mathcal{S}_{k}}$
captures the available acquisition state, combining task context,
accumulated observation information, graph-posterior statistics, the
analytic acquisition feature, and the remaining sensing budget.

\subsection{Learning the graph through posterior refinement.}

For each training example, we randomly sample a reverse step $t$,
obtain the clean estimate $\widehat{\boldsymbol{x}}_{0|t}$ from the
frozen diffusion model, and sample a sparse observation set $\mathcal{S}$.
The graph predictor constructs $\boldsymbol{C}_{g}$ and obtains $\boldsymbol{A}_{\mathcal{S}}$.
We then apply the same differentiable Jacobi conditioner used at deployment
to obtain the residual correction and refined prediction $\widetilde{\boldsymbol{x}}_{0|t}$
through Eq.~(\ref{eq:correction}). Gradients are propagated through
both the graph construction and the unrolled conditioner to $G_{\phi}$.

Let $\mathcal{V}_{\mathrm{val}}\subseteq\{1,\ldots,M\}$ denote the
valid spatial region, and $\mathcal{U}=\mathcal{V}_{\mathrm{val}}\setminus\mathcal{S}$
is the unobserved region. For any subset $\mathcal{R}\ensuremath{\subseteq\mathcal{V}_{\mathrm{val}}}$,
define the masked reconstruction error as
\[
\ensuremath{E_{\mathcal{\mathcal{R}}}(\boldsymbol{\hat{x}}_{0},\boldsymbol{x}_{0})=\frac{{\displaystyle \sum_{i\in\mathcal{\mathcal{R}}}\left|\boldsymbol{\hat{x}}_{0,i}-\boldsymbol{x}_{0,i}\right|^{2}}}{\max(|\mathcal{\mathcal{R}}|,1)}.}\tag{C.4}
\]
The graph-training objective is formulated as
\[
\mathcal{L}_{\mathrm{graph}}=E_{\mathcal{V}_{\mathrm{val}}}(\widetilde{\boldsymbol{x}}_{0|t},\boldsymbol{x}_{0})+\lambda_{u}E_{\mathcal{U}}(\widetilde{\boldsymbol{x}}_{0|t},\boldsymbol{x}_{0})+\lambda_{s}E_{\mathcal{S}}(\widetilde{\boldsymbol{x}}_{0|t},\boldsymbol{y}_{\mathcal{S}})+\lambda_{h}\mathcal{L}_{\mathrm{hinge}}+\lambda_{a}\mathcal{L}_{\mathrm{aff}}.\tag{C.5}
\]
The first three terms supervise full-region reconstruction, propagation
to unobserved locations, and observation consistency, respectively.
To discourage graph refinement from degrading the frozen prediction,
we use
\[
\mathcal{L}_{\mathrm{hinge}}=\mathrm{max}\left(E_{\mathcal{U}}\left(\widetilde{\boldsymbol{x}}_{0|t},\boldsymbol{x}_{0}\right)-E_{\mathcal{U}}\left(\widehat{\boldsymbol{x}}_{0|t},\boldsymbol{x}_{0}\right),0\right).\tag{C.6}
\]
We additionally supervise local residual compatibility. Let $\boldsymbol{\delta}_{t}=\boldsymbol{x}_{0}-\widehat{\boldsymbol{x}}_{0|t}$.
For each valid canonical edge $(i,j)\in\mathcal{E}$,
\[
a^{\mathrm{res}}_{ij}=\exp\left(-\frac{|\boldsymbol{\delta}_{t,i}-\boldsymbol{\delta}_{t,j}|}{s_{\mathrm{res}}}-\frac{|\boldsymbol{x}_{0,i}-\boldsymbol{x}_{0,j}|}{s_{x}}\right),\tag{C.7}
\]
where $s_{\mathrm{res}}>0$ and $s_{x}>0$ are fixed scales. Let $p^{\phi}_{ij}=\mathrm{sigmoid}(z_{i,\xi})$
for $j=i+\xi$. The affinity loss acts on these pre-rescaling edge
probabilities:
\[
\mathcal{L}_{\mathrm{aff}}=-\frac{1}{|\mathcal{E}|}\sum_{(i,j)\in\mathcal{E}}\left[a^{\mathrm{res}}_{ij}\log p^{\phi}_{ij}+(1-a^{\mathrm{res}}_{ij})\log(1-p^{\phi}_{ij})\right].\tag{C.8}
\]
Each undirected edge appears once in $\mathcal{E}$. The affinity
loss therefore supervises the pre-rescaling edge compatibility, while
the reconstruction losses optimize the final conductances end-to-end
through the differentiable graph-Bayesian conditioner. Importantly,
these conductances couple prediction-residual corrections rather than
directly enforcing similarity between reconstructed target values.

In this work, we set $\lambda_{u}=2$, $\lambda_{s}=0.1$, $\lambda_{h}=0.1$,
$\lambda_{a}=0.01$, $s_{\mathrm{res}}=0.05$ and $s_{x}=0.08$.

\subsection{Learning the Acquisition Residual Scorer}

For each training example, an observation set is sampled by $\mathcal{S}_{k}\sim\mathrm{Uniform}\left(\left\{ \mathcal{A}\subseteq\mathcal{V}_{\mathrm{val}}:|\mathcal{A}|<\Gamma\right\} \right)$.
The graph conditioner assimilates $(\mathcal{S}_{k},\mathbf{y}_{\mathcal{S}_{k}})$,
after which we compute the analytic utilities $U_{i}$ for every $i\in\mathcal{C}_{k}$
and assemble $\mathbf{z}_{\mathcal{S}_{k}}$. Consequently, revisiting
the same training sample can produce different examples through different
observation sets.

Let $\widehat{\boldsymbol{\mu}}_{\mathcal{S}_{k}}$ denote the current
graph-conditioned estimate and the error $\boldsymbol{e}=\boldsymbol{x}_{0}-\widehat{\boldsymbol{\mu}}_{\mathcal{S}_{k}}$.
Using the graph response $\boldsymbol{u}_{i}=\boldsymbol{\Sigma}_{\mathcal{S}_{k}}\boldsymbol{e}_{i}/(\sigma^{2}_{y}+\Sigma_{\mathcal{S}_{k},ii})$,
the training-only linear-response teacher evaluates
\[
\begin{aligned}\widehat{\Delta}_{i} & =2\rho_{g}\nu^{\mathrm{tr}}_{i}\boldsymbol{u}^{\top}_{i}\mathbf{W}_{\mathrm{tr}}\boldsymbol{e}-\rho^{2}_{g}(\nu^{\mathrm{tr}}_{i})^{2}\boldsymbol{u}^{\top}_{i}\mathbf{W}_{\mathrm{tr}}\boldsymbol{u}_{i}.\end{aligned}
\tag{C.9}
\]
Here, $\mathbf{W}_{\mathrm{tr}}=\mathrm{diag}(\mathbf{1}_{\{1\in\mathcal{V}_{\mathrm{val}}\}},...,\mathbf{1}_{\{M\in\mathcal{V}_{\mathrm{val}}\}})$,
and $\nu^{\mathrm{tr}}_{i}=\boldsymbol{x}_{0,i}-(\overline{\boldsymbol{x}}_{0}+\overline{\boldsymbol{m}}_{\mathcal{S}_{k}})_{i},$
where $\overline{\boldsymbol{x}}_{0}$ is the original cached prior
mean and $\overline{\boldsymbol{m}}_{\mathcal{S}_{k}}$ is the unscaled
correction. For any candidate quantity $\boldsymbol{x}_{i}$, define
\[
\mathcal{Z}_{\mathcal{C}_{k}}(\boldsymbol{x}_{i})=\frac{\boldsymbol{x}_{i}-\boldsymbol{\mu}_{\mathcal{C}_{k}}}{\max\{\boldsymbol{\sigma}_{\mathcal{C}_{k}},10^{-6}\}},\tag{C.10}
\]
where $\boldsymbol{\mu}_{\mathcal{C}_{k}}$ and $\boldsymbol{\sigma}_{\mathcal{C}_{k}}$
are the mean and standard deviation over $i\in\mathcal{C}_{k}$. The
same normalization of the analytic anchor is used during training
and deployment. We have the normalized analytic utility:
\[
\bar{U}_{i}=\mathcal{Z}_{\mathcal{C}_{k}}\left(\log\max\{U_{i},10^{-12}\}\right),\hspace*{0.3cm}b_{i}=\mathcal{Z}_{\mathcal{C}_{k}}\left(\widehat{\Delta}_{i}\right).\tag{C.11}
\]
The calibrated score and training teacher are given by $\widetilde{U}_{i}=\bar{U}_{i}+\beta_{\mathrm{acq}}\tanh\left(r_{\psi}(i;\mathbf{z}_{\mathcal{S}_{k}})\right)$
and $h_{i}=\lambda_{m}\bar{U}_{i}+(1-\lambda_{m})b_{i}.$ The corresponding
candidate distributions are
\[
\ensuremath{p^{\mathrm{\tau}}_{i}=\frac{\exp(h_{i}/T_{\mathrm{\tau}})}{{\displaystyle \sum_{j\in\mathcal{C}_{k}}\exp(h_{j}/T_{\mathrm{\tau}})}},\hspace*{0.3cm}}\ensuremath{p^{\psi}_{i}=\frac{\exp(\widetilde{U}_{i}/T_{\psi})}{{\displaystyle \sum_{j\in\mathcal{C}_{k}}\exp(\widetilde{U}_{j}/T_{\psi})}},\ensuremath{\hspace*{0.3cm}p^{\mathrm{a}}_{i}=\frac{\exp(\bar{U}_{i}/T_{\psi})}{{\displaystyle \sum_{j\in\mathcal{C}_{k}}\exp(\bar{U}_{j}/T_{\psi})}},}}\tag{C.12}
\]
where $T_{\mathrm{\tau}},T_{\psi}>0$ are temperatures. These distributions
are used for training, and deployment selects locations from scores.
Let $i^{\star}=\arg\max_{i}h_{i}$, $\widehat{i}=\arg\max_{i}\widetilde{U}_{i}$,
$h_{\max}=\max_{i}h_{i}$, and $h_{\min}=\min_{i}h_{i}$. The distillation
and selection losses are
\[
\mathcal{L}_{\mathrm{dist}}=-\sum_{i}p^{\mathrm{\tau}}_{i}\log p^{\psi}_{i},\hspace*{0.1cm}\mathcal{L}_{\mathrm{top}}=-\log\frac{\exp(\widetilde{U}_{i^{\star}})}{{\displaystyle \sum_{j}\exp(\widetilde{U}_{j})}},\hspace*{0.1cm}\mathcal{L}_{\mathrm{reg}}=\frac{\sum_{i}p^{\psi}_{i}\left(h_{\max}-h_{i}\right)}{\max(h_{\max}-h_{\min},10^{-8})}.\tag{C.13}
\]
They respectively transfer the teacher preference distribution, promote
its highest-scoring candidate, and penalize expected teacher regret.
We further supervise the learned departure from the normalized analytic
anchor using
\[
\mathcal{L}_{\mathrm{cal}}=\frac{1}{2}\sum_{i}(p^{\mathrm{\tau}}_{i}+p^{\mathrm{a}}_{i})\mathcal{H}\left(\widetilde{U}_{i}-\bar{U}_{i}-\Pi_{[-1,1]}(h_{i}-\bar{U}_{i})\right),\tag{C.14}
\]
where $\ensuremath{\mathcal{H}(d)=d^{2}/2}$ for $\ensuremath{|d|<1}$,
otherwise $|d|-1/2$. We additionally introduce
\[
\begin{aligned}\mathcal{L}_{\mathrm{mar}} & =\mathbf{1}_{\{\widehat{i}\ne i^{\star}\}}\mathrm{max}\left(0.5+\widetilde{U}_{\widehat{i}}-\widetilde{U}_{i^{\star}},0\right),\hspace*{0.3cm}\mathcal{L}_{\mathrm{anc}}=\mathrm{max}\left(\sum_{i}p^{\mathrm{a}}_{i}h_{i}-\sum_{i}p^{\psi}_{i}h_{i},0\right).\end{aligned}
\tag{C.15}
\]
The margin term promotes separation from an incorrect greedy winner,
while the anchor-relative term prevents the learned policy from having
lower teacher-expected value than the analytic-anchor distribution.
All sums and extrema above are over $\mathcal{C}_{k}$.

The complete objective is $\mathcal{L}_{\mathrm{scorer}}=\sum_{j\in\mathcal{J}}\lambda_{j}\mathcal{L}_{j},$
where $\mathcal{J}=(\mathrm{dist,top,reg,cal,mar,anc})$ specifies
the ordered set of loss terms. Only the scorer $\psi$ is optimized;
prior generation, graph conditioning, analytic scoring, and teacher
construction are detached. Following the order in $\mathcal{J}$,
we use $\lambda_{\mathcal{J}}=(1,0.1,2,0.1,0.1,0)$, $(T_{\mathrm{\tau}},T_{\psi})=(0.15,0.30)$
and $\lambda_{m}=0$ for RFR, and $\lambda_{\mathcal{J}}=(1,0.1,0.5,0.25,0,0.05)$,
$(T_{\mathrm{\tau}},T_{\psi})=(0.25,0.25)$ and $\lambda_{m}=0.2$
for DC. We set the refinement strength to $\rho_{g}=1.4$ for RFR
and $\rho_{g}=1.0$ for DC, with a base residual-score scale of $\beta_{\mathrm{acq}}=2.0$
for both tasks.

\end{document}